\newcommand{\Sys}{\textsc{Sequoia}\xspace}
\newcommand{\sys}{\textsc{Sequoia}\xspace}
\theoremstyle{plain}
\newtheorem{theorem}{Theorem}[section]
\newtheorem{proposition}[theorem]{Proposition}
\newtheorem{lemma}[theorem]{Lemma}
\newtheorem{corollary}[theorem]{Corollary}
\theoremstyle{definition}
\newtheorem{definition}[theorem]{Definition}
\theoremstyle{remark}
\newcommand\Tau{\mathcal{T}}% Caligraphic T for example
\title{\Sys: Scalable and Robust Speculative Decoding}
\author{
   Zhuoming Chen$^{1}$\thanks{Equal contribution} \quad Avner May$^{2*}$ \quad Ruslan Svirschevski$^{3,4*}$
  \\
  \textbf{Yuhsun Huang}$^{1}$
  \quad
  \textbf{Max Ryabinin}$^{2}$
  \quad
  \textbf{Zhihao Jia}$^{1}$
  \quad
  \textbf{Beidi Chen}$^{1,5}$\\
  $^1$Carnegie Mellon University \quad $^2$Together AI \quad
  $^3$Yandex \\
  $^4$National Research University Higher School of Economics \quad $^5$FAIR, Meta\\
    \texttt{zhuominc@andrew.cmu.edu}, \texttt{avner@together.ai},  \texttt{ruslansv@gmail.com} \\
    \texttt{yuhsunh@andrew.cmu.edu}, \texttt{mryab@together.ai}, \texttt{\{zhihaoj2,beidic\}@andrew.cmu.edu}
}
\begin{document}
\maketitle

\begin{abstract}
As the usage of large language models (LLMs) grows,
it becomes increasingly important to serve them quickly and efficiently.
While speculative decoding has recently emerged as a promising direction for accelerating LLM serving, existing methods are limited in their ability to scale to larger speculation budgets and adapt to different hyperparameters.
This paper introduces \sys, a scalable and robust algorithm for speculative decoding.
To improve scalability, \sys introduces a dynamic programming algorithm to find an optimal tree structure for the speculated tokens.
To achieve robust speculative decoding, \sys uses a novel sampling and verification method that outperforms prior work across different decoding temperatures.
% Finally, \sys introduces a hardware-aware tree optimizer that maximizes speculative performance by automatically selecting the token tree size and depth for a given hardware platform. 
\sys improves the decoding speed of Llama2-7B, Llama2-13B, and Vicuna-33B on an A100 GPU by up to $4.04\times$, $3.73\times$, and $2.27 \times$. 
To serve Llama3-70B-Instruct on a single L40 GPU through offloading, \Sys reduces the per-token decoding latency to 0.60 s/token, $9.5\times$ faster than DeepSpeed-Zero-Inference. The code is available at  \url{https://github.com/Infini-AI-Lab/Sequoia}. 
\end{abstract}
\section{Introduction}
As large language models (LLMs) gain widespread adoption~\citep{brown2020language,touvron2023llama,chowdhery2022palm}, efficiently serving these LLMs becomes increasingly important.
However, accelerating LLM inference is challenging since generating a single new token requires accessing all parameters of the LLM~\citep{pope2022efficiently}.
As a result of this I/O bottleneck, the hardware is poorly utilized during generation.
This problem is exacerbated in both small-batch and offloading-based inference settings, where generating one token takes as much time as processing a prompt with hundreds or thousands of tokens on modern GPUs.

\begin{figure*}[t]
\centering
\includegraphics[width=0.95\columnwidth]
{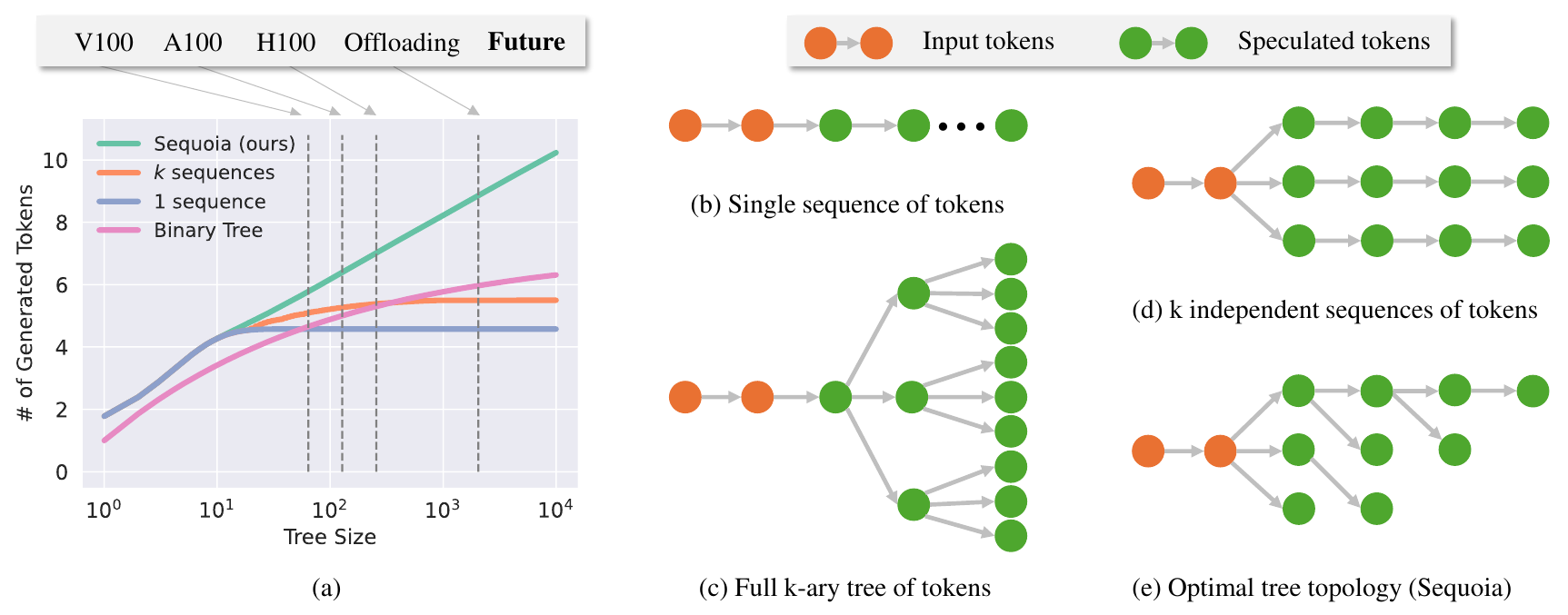}
  \caption{\sys is a scalable method for speculative decoding. Left: \sys tree construction algorithm is able to generate trees whose average number of generated tokens (after verification) continues to grow with the tree size while existing tree structures asymptote. 
  This allows \sys to perform much better than existing methods in very memory-bound regimes like offloading.
  Right: A visualization to contrast \sys tree structure with other common handcrafted ones.}
  \label{fig:intro}
\end{figure*}

To address this challenge, recent work has introduced {\em speculative decoding} to accelerate LLM inference while preserving the LLM's output distribution~\citep{leviathan2023fast,chen2023,miao2023specinfer,sun2023spectr}.
These approaches leverage one or multiple {\em draft models} to predict the LLM's output;
the predictions are organized in a {\em token tree}, whose nodes represent different sequences of speculated tokens.
The correctness of these speculated tokens is then {\em verified in parallel} through a single forward pass of the LLM.
Using a token tree---instead of a sequence---can increase the number of tokens accepted by the LLM by providing several options for each token position.

While there are substantial studies on tree-based speculative decoding methods~\citep{miao2023specinfer,sun2023spectr}, we see in our experiments that they have a couple of limitations.
First, we observe that existing token tree construction algorithms perform well for small token trees but are sub-optimal for large tree sizes.
For example, SpecInfer constructs a token tree using $k$ independent sequences, a topology that is bounded by the expected number of tokens it can accept, regardless of the tree size (Figure~\ref{fig:intro}).
Second, we observe that existing token tree sampling and verification algorithms are unable to perform well across inference hyperparameter configurations;
for example, SpecInfer~\citep{miao2023specinfer} and SpecTr~\citep{sun2023spectr} often perform poorly at low temperatures (\Cref{fig:robustness}) since they can repeatedly sample an incorrect token with high draft model probability.

In this paper, we aim to answer the following research question: {\em how can we design an optimal tree-based speculative decoding method to maximize speedups on modern hardware?}
Realizing this goal requires addressing several technical challenges.
First, for any tree size and depth, we must be able to efficiently search the exponentially large space of tree topologies to find the one that maximizes the expected number of generated tokens.
Second, we must design a tree sampling and verification procedure that performs well across inference hyperparameters, avoids repeatedly sampling incorrect tokens, and maintains the correct output distribution.

% Third, for any hardware, we must be able to choose the tree size and depth that will provide the largest speedup, when paired with the optimal tree of those dimensions.

This paper introduces \sys, a scalable and robust speculative decoding algorithm.
As shown in Figure~\ref{fig:intro}, \sys can attain up to 9.5$\times$ speedups over incremental decoding and introduces several key techniques to address the aforementioned challenges. \begin{itemize}[itemsep=0.0pt,topsep=0pt,leftmargin=*]
\item In~\Cref{sec:method_tree_construct}, to solve the first challenge, we formulate tree construction as a constrained optimization problem and employ a dynamic programming algorithm to discover the {\em optimal} speculative token tree.
Theoretically and empirically, we demonstrate that the number of tokens generated with this tree structure is unbounded, growing roughly logarithmically with the tree's size.
\item In~\Cref{sec:method_tree_verify}, to address the second challenge, we build upon the SpecInfer~\citep{miao2023specinfer} algorithm by performing sampling \textit{without replacement} from the draft model---thereby preventing the draft model from making the same mistake twice, while maintaining the target model's output distribution.
We prove that this new sampling and verification method can attain high acceptance rates at both high and low temperatures and validate this claim empirically.
\end{itemize}

% In~\Cref{sec:evaluation}, we perform extensive end-to-end experiments and ablation studies to demonstrate the effectiveness of \sys.
% We implement \sys on top of Hugging Face (and Accelerate)~\cite{wolf2019huggingface, accelerate} with CUDA Graphs~\cite{cuda, paszke2019pytorch}.
% We show that \sys achieves up to $4.04\times$ speedup for Llama2-7B on a single A100 GPU and $10.33\times$ for Llama2-70B in the offloading setting on an L40 GPU.
% We also present ablation studies to show that:
% (1) the \sys tree structure can generate up to $33\%$ more tokens per decoding step compared to $k$ independent sequences (tree size $\leq 512$), demonstrating better scalability;
% (2) the \sys sampling and verification algorithm is robust to the choice of hyperparameters (temperature, top-$p$), providing up to $65\%$ and $27\%$ speedup compared to SpecInfer and top-$k$ sampling and verification algorithms, respectively;
% (3) the \sys hardware-aware tree optimizer can automatically select the best tree size and depth for different hardware.
In~\Cref{sec:evaluation}, we perform extensive end-to-end experiments and ablation studies to demonstrate the effectiveness of \sys.
We implement \sys on top of Hugging Face~\cite{wolf2019huggingface} with CUDA Graphs~\cite{cuda, paszke2019pytorch}.
We show that \sys achieves up to $4.04\times$ speedup for Llama2-7B on a single A100 GPU and $9.5\times$ for Llama3-70B-Instruct in the offloading setting on an L40 GPU. The latency of Llama3-70B-Instruct offloading on L40 can be reduced to 0.60 s/token with \Sys while the inference speed of state-of-the-art offloading system (DeepSpeed-Zero-Inference~\cite{aminabadi2022deepspeed}) is 5.7 s/token.
We also present ablation studies to show that:
(1) the \sys tree structure can generate up to $33\%$ more tokens per decoding step compared to $k$ independent sequences (tree size $\leq 512$), demonstrating better scalability;
(2) the \sys sampling and verification algorithm is robust to the choice of hyperparameters (temperature, top-$p$), providing up to $65\%$ and $27\%$ speedup compared to SpecInfer and top-$k$ sampling and verification algorithms, respectively.

\vspace{-2mm}
\section{Background}
\vspace{-2mm}
\label{sec:background}
% We first review sequence-based speculative decoding in Section~\ref{sec:background_seq}, followed by a review of tree-based speculative decoding methods in Section~\ref{sec:background_tree}.
Here, we review tree-based speculative decoding methods.
% The goal of tree-based speculative decoding methods is to increase the average number of accepted tokens for a fixed number of speculated tokens, relative to the sequence-based methods~\citep{leviathan2023fast,chen2023}.\footnote{We note that the original sequence-based speculative decoding methods are a special case of the tree-based methods. For a review of these methods, see Appendix~\ref{sec:background_seq}.}
% Intuitively, a token tree can provide several options for each position in the token sequence, thus making it more likely that one of those options will be accepted by the target model.
% These tree-based methods include SpecInfer~\citep{miao2023specinfer}, SpecTr~\citep{sun2023spectr}, and naive sampling~\citep{miao2023specinfer}.
% We additionally consider a stronger version of naive sampling we developed as a baseline, which we denote ``top-$k$ sampling''.
% These methods are quite similar in terms of the structure of their speculated trees, but differ more meaningfully in their approaches to sample and verify these trees.
In particular, we discuss the way existing methods choose the speculated tree structure (Section~\ref{sec:background_tree_construction}) and the algorithms they use to sample and verify the token trees (Section~\ref{sec:background_tree_verification}).
% , and the way these methods can automatically select the shape of the token tree (Section~\ref{sec:background_tree_optimizer}).

% We first review tree-based speculative decoding methods in Section~\ref{sec:background_tree}.\footnote{For a review of the original sequence-based speculative decoding method, please see Appendix~\ref{sec:background_seq}.}
% We then discuss two limitations of these speculative decoding methods which we observed in our experiments in Section~\ref{sec:background_motivation}; these limitations motivated our design of \sys.
% \red{I originally had a section on vanilla sequence-based speculative decoding but it took a ton of space, so I moved it to Appendix A.}

%\subsection{Tree-based speculative decoding}

% Recently, speculative decoding algorithms which can speculate and verify arbitrary token trees---instead of simply token sequences---have been proposed.
% We now discuss the way these methods construct token trees using a draft model~, following by the way they verify these trees using the target model, and the way they can automatically optimize the size of the token tree.

\subsection{Tree construction}
\label{sec:background_tree_construction}
% The SpecInfer, SpecTr, naive sampling, and top-$k$ sampling methods all construct trees by first defining the structure of the tree, and then using the draft model to populate the tree (assigning a token to each node) via sampling.
The primary tree structure used by existing methods is one composed of $k$ independent sequences of length $L$ that branch from the tree root (which corresponds to the current prefix). %$ $[x_1, x_2, ..., x_{n-1}]$).
% The SpecInfer paper additionally considers branching at the third node in the tree, corresponding to a branching pattern of $(1, 1, k, 1, \ldots, 1)$.
The SpecTr paper additionally considers arbitrary branching patterns $(k_1, k_2, ..., k_t)$, but says that this did not perform better in their experiments than independent sequences.
% Medusa~\citep{cai2024medusa} considers a $k$-ary 
Medusa constructs a full $k$-ary tree, which increases the success rate at each layer but cannot form a deep tree under moderate token budgets~\cite{cai2024medusa}.

% The primary method for populating the nodes of the tree is sampling \textit{with replacement} from the draft model---for example, if a node has been populated already and has $k$ children, one can sample $k$ tokens from the draft model (conditioning on the token prefix corresponding to that node) to populate these children.
% Importantly, the SpecInfer and SpecTr verification methods (discussed below) require that the tree be populated via i.i.d. sampling (with replacement) from the draft model, to be able to guarantee that the output distribution of the target model is unchanged.
% However, the verification algorithm for naive sampling and top-$k$ sampling actually allows populating the tree using any algorithm.
% While naive sampling does so using sampling with replacement, for top-$k$ sampling we propose taking the $k$ tokens with highest draft model probability, a much stronger method in practice.

\subsection{Tree sampling and verification}
\label{sec:background_tree_verification}
We now review how SpecInfer~\citep{miao2023specinfer}, SpecTr~\citep{sun2023spectr}, naive sampling~\citep{miao2023specinfer}, and top-$k$ sampling\footnote{Top-$k$ sampling is an improved version of naive sampling which we introduce as a baseline in this work.} perform token tree sampling and verification.
With regard to sampling, SpecInfer, SpecTr, and naive sampling all perform i.i.d. sampling with replacement from the draft model, while top-$k$ sampling selects the top-$k$ highest probability tokens from the draft model.
In terms of verification, SpecInfer and SpecTr compare the draft and target model probabilities for the sampled tokens to decide which (if any) to accept;
naive and top-$k$ sampling, on the other hand, sample a token from the \textit{target model} distribution and accept it if it corresponds to one of the tokens from the speculated tree.
These methods all verify a speculated token tree in a recursive manner---starting at the root of the tree---differing only in the verification algorithm they apply at each node.
% These methods start at the root of the tree, and run their verification algorithm to determine which child---if any---of the root to accept;
% if a child is accepted, the verification algorithm is run again, treating the accepted child as the new root.
% Otherwise, the algorithm early exits, and a sample is drawn for a residual probability distribution specified by the algorithm.
\vspace{-2mm}

\paragraph{SpecInfer:} The SpecInfer method iteratively verifies tokens that were sampled from one or more draft models.
Like the original speculative decoding method~\citep{leviathan2023fast}, it compares the draft model probabilities to those from the target model to decide if to accept.
% Whenever a token is rejected, it 
%  Given that our algorithm builds on SpecInfer, we present the SpecInfer algorithm in detail in Algorithm~\ref{alg:specinfer}.
Note that while the SpecInfer method allows sampling from $k$ different draft models to generate $k$ children for a node, in this work we consider the more common setting where only one draft model is available.
Therefore, we compare with the version of SpecInfer which samples from a single draft model $k$ times instead.
% One important thing to notice is that after each iteration of the for loop in this algorithm, the residual probability distribution---which is sampled from in the case all children are rejected, and is used in acceptance decisions---is updated.
To see pseudocode for SpecInfer, please see Algorithm~\ref{alg:sequoia-verify} and ignore all blue lines (lines 10-16).

\vspace{-2mm}
\paragraph{SpecTr:} The SpecTr algorithm is similar in spirit to the SpecInfer algorithm.
It iterates through the children of a node, and uses a sampling procedure to decide if to accept a child, in such a way that the output distribution is unchanged.
One important property of this algorithm is that it is within a factor of $(1-1/e)$ of the best possible verification algorithm (i.e., the one with highest possible acceptance rate).
For brevity, we refer readers to Algorithm 3 in the SpecTr paper for the exact pseudocode for this algorithm.
\vspace{-2mm}

\paragraph{Naive sampling and top-$k$ sampling:} Given a node in a token tree, the verification algorithm for naive sampling and top-$k$ sampling first samples from the \textit{target model's} distribution $\mathcal{P}(\cdot\;|\; x_{< n})$ at that node, and then accepts this sample if it is equal to one of the children of that node.
This verification algorithm trivially maintains the target model output distribution---regardless of how the token tree was generated---given that one always samples from the target model in this algorithm (as opposed to from the draft model, like in SpecTr and SpecInfer).
% Let $S$ to be the set of current node's children, the acceptance rate of naive sampling is equal to $\sum_{x \in S} P(x \;|\; x_{\leq n})$.
% One can see that if the children of a node includes repeated tokens, these repetitions will not increase the acceptance rate.
This observation motivates our choice---for the top-$k$ sampling method---to populate the tree by taking the top-$k$ children of each node, instead of the naive sampling approach of taking $k$ i.i.d. samples (with replacement).
We use the top-$k$ sampling method in our experiments in Section~\ref{sec:method_tree_verify}, to better understand the limits of this verification algorithm.
\vspace{-2mm}

\section{\sys}
\label{sec:method}

We now present \sys, a scalable and robust speculative decoding algorithm.
\begin{itemize}[itemsep=0.0pt,topsep=0pt,leftmargin=*]
    \item In Section~\ref{sec:method_tree_construct}, we present our scalable tree construction algorithm, which uses dynamic programming to solve for the optimal tree structure.
    % , assuming knowledge of the draft model's average acceptance rates.
    We demonstrate both theoretically and empirically that the number of tokens generated by verifying \sys trees scales nearly logarithmically in the size of the tree, while existing tree structures asymptote in the number of tokens they can generate.
    \item In Section~\ref{sec:method_tree_verify}, we present our robust tree verification algorithm, which modifies the SpecInfer algorithm by sampling \textit{without replacement} from the draft model.
    We show both theoretically and empirically that \sys is robust, performing well across temperature values, while existing verification methods are not.
\end{itemize}

% \begin{figure*}[t]
%     \centering
%     \includegraphics[width=135px]{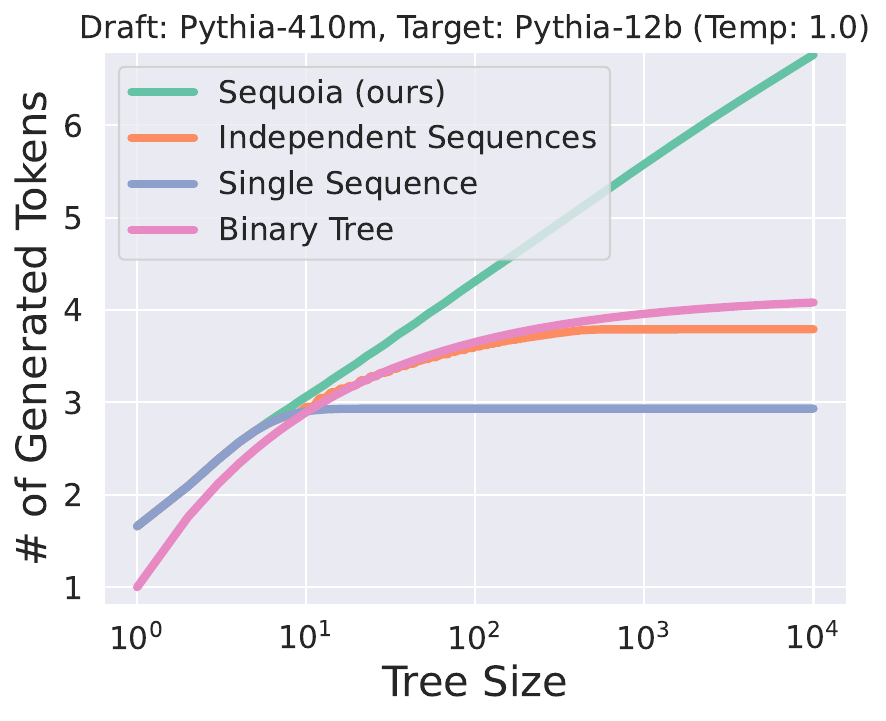}\includegraphics[width=135px]{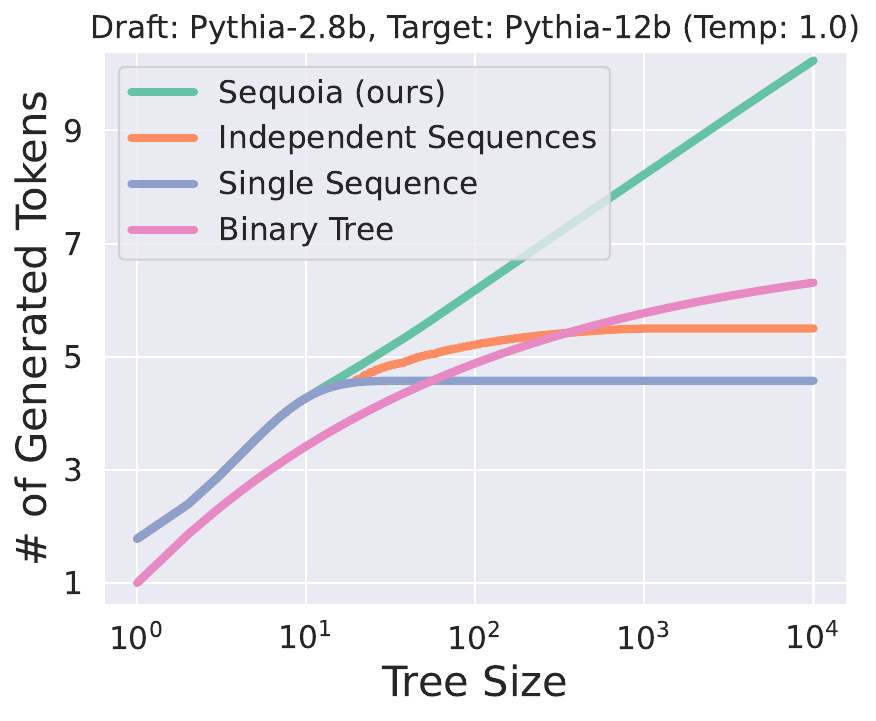}\includegraphics[width=137px]{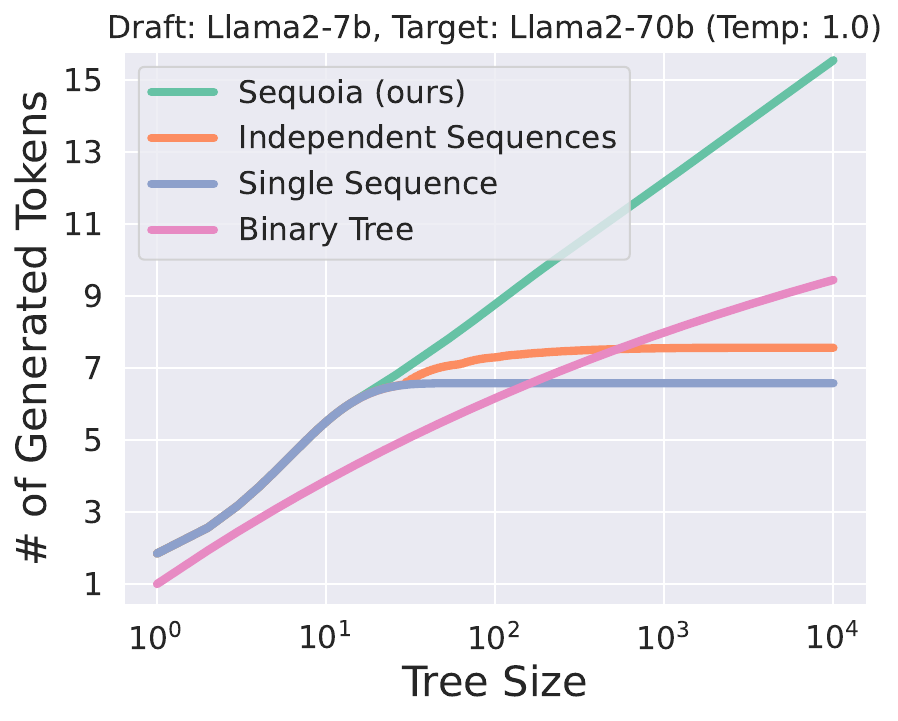}\vspace{-6px}
%     \caption{\textbf{Number of generated tokens vs. tree size}: We plot the average number of tokens generated for different tree structures per decoding step of the target model, as a function of the tree size, for different draft and target model pairs. The number of generated tokens for \sys trees continues to grow with the tree size, while other tree structures asymptote.}
%     \vspace{-2mm}
%     \label{fig:scalability}
% \end{figure*}

\subsection{Tree construction}
\label{sec:method_tree_construct}
We now present the \sys tree construction algorithm (Section~\ref{sec:method-sequoia-dp}), and prove that the expected number of tokens generated when verifying for these trees scales well with the tree size (Section~\ref{sec:method-sequoia-theory}).

\subsubsection{Algorithm}

\begin{wrapfigure}{r}{220px}
    \centering
    \includegraphics[width=95px]{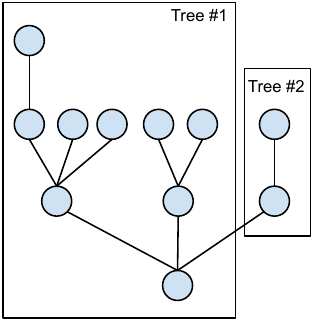}\hspace{1px}\includegraphics[width=95px]{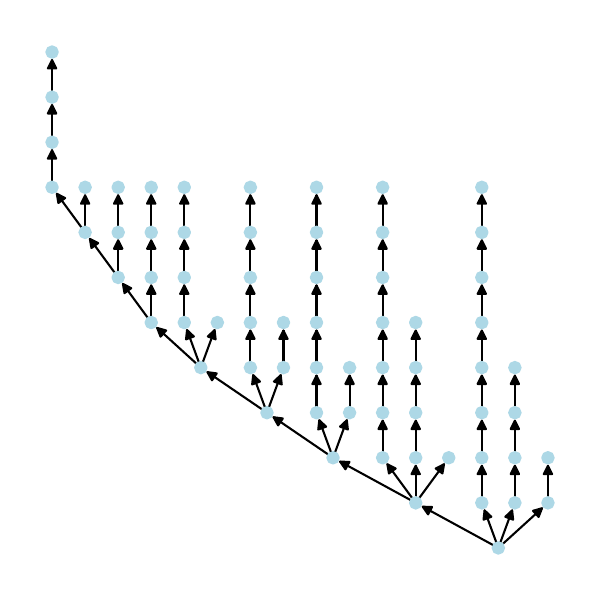} %\vspace{-6px}
    \caption{\textbf{Left}: Recursive sub-structure use by the dynamic programming algorithm. \textbf{Right}: Real example of \sys tree of size 64, and maximum depth 12. We present more examples of \sys trees in Figure~\ref{fig:sequoia-tree-examples} in Appendix~\ref{app:sequoia-tree-examples}.}
    \vspace{-6mm}
    \label{fig:trees}
\end{wrapfigure}

\label{sec:method-sequoia-dp}
To derive the \sys tree construction algorithm, we first express the tree construction problem as a constrained optimization problem, and then use dynamic programming to solve this problem optimally and efficiently.
In this optimization problem, we aim to maximize the expected number of tokens $F(\Tau)$ generated by verifying a token tree $\Tau$, under a constraint on the size of $\Tau$.
We begin by presenting a closed form expression for $F(\Tau)$ (Proposition~\ref{prop:f_tau}).
We then present our tree construction algorithm, which uses dynamic programming to find the tree of size $n$ which maximizes this expression (for any value of the speculation budget $n$).

% Our algorithm assumes that it has access to a vector $p$ of average acceptance rates (in our case, measured using the Sequioa verification algorithm), where $p_i$ corresponds to the probability that the $i^{th}$ token proposed by the draft model is accepted by the target model (i.e., in line 11 of Algorithm~\ref{alg:\sys-verify}).
% It then uses a closed-form expression $F(\Tau)$ for the expected length of the accepted token sequence for a give tree $\Tau$, and defines the optimization problem as searching for the tree $\Tau$ of size $\leq M$ which maximizes $F(\Tau)$.

We first present a number of important definitions:
\begin{definition}
\label{def:positional_acceptance}
Under the \textit{positional acceptance assumption}, the probability of a verification algorithm accepting a token $t$ which is the $k^{th}$ child of an already accepted token depends only on the value of $k$.
\end{definition}

\begin{definition}
The \textit{acceptance vector} is the vector $p = (p_1, \;p_2, \;\ldots, \;p_k, \; \ldots)$ containing the probabilities $p_k$ that the verification algorithm accepts a token at child position $k$.
Under the positional acceptance assumption, the acceptance dynamics of a verification algorithm can be completely described by the acceptance vector.

% Under the \textit{positional acceptance assumption}, the acceptance dynamics of a verification algorithm can be completely described by the acceptance vector $p = [p_1, p_2, \ldots, p_k, \ldots]$ of positional acceptance probabilities, where $p_k$ denotes the probability the algorithm accepts a token at child position $k$.
\end{definition}

\begin{definition}
\label{def:score_fn}
Given an acceptance vector $p$ and a tree $\Tau$, we define the \textit{score function} $f(v)$ for a node $v \in \Tau$ as $f(v) = \displaystyle\prod_{i \in \texttt{Path}(v)} p_i$.
where $\texttt{Path}(v)$ is equal to the list of child indices along the path from the root to a node $v \in \Tau$. For example, if $v$ is the $3^{rd}$ child of the root's $2^{nd}$ child, then $\texttt{Path}(v) = [2,3]$. We define $f(root) = 1$.
\end{definition} 

We are now ready to present Proposition~\ref{prop:f_tau} (proof in Appendix~\ref{app:sequoia_derivation}), which shows the closed form equation for the expected number of tokens generated by verifying a token tree $\Tau$, under the positional acceptance assumption.
This is the equation which our \sys dynamic program will optimize.

\begin{proposition}
\label{prop:f_tau}
Let $\Tau$ be a token tree that is verified with the positional acceptance assumption, and let $f(v)$ denote the score function for a node $v \in \Tau$.
Then the the expected number of tokens $F(\Tau)$ generated by verifying $\Tau$ equals
$$F(\Tau) = \sum_{v \in \Tau} f(v) \;.$$
\end{proposition}

\vspace{-9px}
\paragraph{\sys Dynamic Programing Algorithm. } The \sys tree construction algorithm finds the tree $\Tau$ of size $N$ which maximizes $F(\Tau)$, using dynamic programming.
Our algorithm works by iteratively filling in the following 2-dimension tensor $T$: %  of size $(N+1, B+1)$:

\vspace{-9px}
\begin{align}
\label{eq:dp_recursion}
T(n, \;\; b) = \max_{\Tau, \; |\Tau| = n, \;\; \text{FirstBranch}(\Tau)=b} F(\Tau), \quad \forall \;\; 0 \leq n \leq N, \;\; 0 \leq b \leq B.
\end{align}
\vspace{-4px}

% \begin{wrapfigure}{r}{90px}
% % \vspace{1mm}
%     \centering\includegraphics[width=90px]{icml2024/figures/sequoia_tree.pdf}
%     % \vspace{-2mm}
%     % \caption{Recursive sub-structure used for DP algorithm}
%     % \vspace{-60mm}
%     \label{fig:recursive}
% \end{wrapfigure}

% \begin{wrapfigure}{r}{90px}
% \vspace{4mm}
%     \centering\includegraphics[width=90px]{icml2024/figures/sequoia_dp_v2.pdf}
%     % \vspace{-2mm}
%     \caption{Recursive sub-structure used for DP algorithm}
%     % \vspace{10mm}
%     \label{fig:recursive}
% \end{wrapfigure}

Here, $\text{FirstBranch}(\Tau)$ denotes the number of direct children the root of $\Tau$ has,
and $B$ denotes an upper bound we impose on the number of direct children any node in the tree can have (we can let $B=N-1$ to make this constraint vacuous).
Given the tensor $T$, the maximum expected number of generated tokens for any tree of size $n \leq N$ can be found by searching over all possible first-branch values $b$: $\max_{0 \leq b \leq B} \;\; T[n, \;\; b]$.

We now show how to iteratively fill in the tensor $T$ (which we initialize to negative $\infty$).
Pseudocode for the full dynamic programming method is shown in Algorithm~\ref{alg:dp-unbounded}).

As the base case, we set $T[1, 0] = 1$, representing the tree composed of just the root node, because 1 token is generated per iteration of speculative decoding when no tokens are speculated.

For the recursive case, we can consider the tree composed of the root node and its first $b-1$ children and their descendants (tree \#1), as well as the tree whose root is the last child of the root node and its descendants (tree \#2).
Letting $m \geq 1$ denote the number of nodes in tree \#2, we can see that the expected number of generated tokens for tree \#1 is $T[n - m, \;\; b-1]$.
Furthermore, the expected number of generated tokens for tree \#2 is $\max_{0 \le j \le B} \;\; T[m, \;\; j]$, but this sub-tree is only considered in the case where the $b^{th}$ child of the primary root node is accepted (which happens with probability $P[b]$).
Therefore, we can compute $T[n, \;\; b]$ by searching over all possible sizes $m$ for tree \#2 to find the one which maximizes the expected number of generated tokens for the full tree:
% with 1 \leq m \leq n-1$ to maximize we can see that $\forall \;\;  2 \leq n \leq N, \;\;\;\; 2 \leq b \leq B$:
% \vspace{4mm}
$$T[n, \;\; b] = \max_{1 \le m \le n-1} \;\; \bigg(
    T[n - m, \;\; b-1] + P[b] \cdot \max_{0 \le j \le B} \;\; T[m, \;\; j])
\bigg).$$

We show in Appendix~\ref{app:dp-details} that by keeping track of the values of $m$ and $b$ that maximize the $\max$ expressions on lines 9 and 11, we can easily reconstruct the optimal tree $\Tau$ of size $N$ (and $\text{FirstBranch}(\Tau) \leq B$) that attains the maximum expected number of generated tokens.
We additionally demonstrate in this appendix (with python implementation) that we can extend this algorithm in a couple important ways:

\begin{itemize}[itemsep=0.0pt,topsep=0pt,leftmargin=*]
\item \textbf{Bounded tree-depth}: Because the amount of time it takes to speculate a token tree is proportional to the depth of the tree, it can be very beneficial to find the tree of depth $\leq D$ that maximizes the expected number of generated tokens.
We demonstrate in Algorithm~\ref{alg:dp-python} that we can extend the \sys dynamic program to find the optimal tree of bounded depth.
\item \textbf{Compatibility with self-speculation}: For self-speculation methods like Medusa~\citep{cai2024medusa}, Eagle~\citep{eagle2024}, and GLIDE~\citep{glide2024} which leverage the target model's representations on the current prefix during decoding, the acceptance rates can meaningfully degrade as you get deeper into the speculation tree (i.e., further away from the current prefix).
We demonstrate in Algorithm~\ref{alg:dp-python} that it is simple to extend our \sys dynamic program to take as input a 2-D acceptance rate \textit{matrix} (instead of a 1-D vector) containing the average acceptance rate vectors at different tree depths.
Thus, \sys is compatible with the latest advances in self-speculation methods, which can attain meaningfully higher acceptance rates than ``standalone'' draft models.
\end{itemize}

This algorithm can be run \textit{offline}, and thus does not slow down inference.

\begin{algorithm}[ht]
%\small
\caption{\sys Dynamic program}
\label{alg:dp-unbounded}
\begin{algorithmic}[1]{
\STATE {\bf Input:} $N$ for the maximum tree size, $B$ for the maximum number of branches of any node.
$P[1], \;P[2], \; \ldots, \;P[B]$ for the probability of acceptance for each branch.
\STATE {\bf Output:} $T[n, \;\; b] \quad \forall \;\; 0 \; \le \; n \; \le N, \;\; 0 \; \le \; b \; \le \; B$.

\STATE{Initialize array $T$, of size $(N+1, \;B+1)$, with $-\infty$ in all entries.}
\STATE{Initialize array $T_{max}$, of size $(N+1)$, with $-\infty$ in all entries.}
% \STATE{Initialize array $T_{max}$, of size $(N+1)$, with 0 in all entries.}
\STATE{$T[1, \;\; 0] = 1$}
\STATE{$T_{max}[1] = 1$}
\FOR{$n=2 \rightarrow N$}
\FOR{$b=1 \rightarrow B$}
\STATE{$T[n, \;\; b] = \max_{1 \le m \le n-1} \Big(T[n-m, \;\; b-1] + P[b] \;\cdot\; T_{max}[m]\Big)$}
% m^* is the optimal number of tokens to assign to tree #2 (rooted at child b), 
% assuming a tree of size n with b children.
% \STATE{$m^* = \argmax_{1 \le m \le n-1} \Big(T[n-m, \;\; b-1] + P[b] \;\cdot\; T_{max}[m]\Big)$}
\ENDFOR
\STATE{$T_{max}[n]  \;=\; \max_{0 \le b \le B}T[n, \;\;b]$}
% b^* is the optimal number of children for a tree of size n
% \STATE{$b^*  \;=\; \argmax_{0 \le b \le B}T[n, \;\;b]$}
\ENDFOR
\STATE{\bf Return array $T$}
}
\end{algorithmic}
\end{algorithm}

% \begin{wrapfigure}{}{0.55\textwidth}
% \vspace{-0.35in}
% \begin{minipage}{0.55\textwidth}
% \begin{algorithm}[H]
% \begin{algorithmic}[1]
% {\small
%     \State LSH.\textsc{query} and check $S$ 
% }
% \end{algorithmic}
% \caption{General Maintenance Data Structure }\label{alg:maintain} 
% \end{algorithm}
% \end{minipage}
% \vspace{-1.7cm}
% \end{wrapfigure}

\subsubsection{Theoretical Results}
\label{sec:method-sequoia-theory}
We now prove that the \sys tree construction algorithm scales well with the size of the speculated tree.
In particular, we show that under certain assumptions on the acceptance rates of the verification algorithm, the number of generated tokens is lower-bounded by a function which is (roughly) logarithmic in the size of the tree.
This is in contrast to existing tree construction algorithms, which are upper bounded in the expected number of tokens they generate, regardless of the size of the tree.
For example, a single sequence of tokens has upper bound $1/(1-P_1)$~\citep{leviathan2023fast};
$k$ independent sequences can only increase this upper bound by 1, because they only increase the chance of acceptance of the first token.
Even an infinitely deep binary tree is upper bounded by $1/(1-P_2)$.
% Even an infinitely deep binary tree (or $k$-ary tree) is upper bounded by $1/(1-P_2)$ (or $1/(1-P_k)$).

We first define what it means for a verification algorithm to have a \textit{$b$ power-law acceptance rate}, and then present our theorem on the scalability of \sys trees, under the assumption that the verification algorithm has a $b$ power-law acceptance rate.

\begin{definition}
\label{def:power_law}
We say that a tree verification algorithm has a \textit{$b$ power-law acceptance rate} if the chance $r_k$ of the tree verification algorithm rejecting all $k$ speculated children of a node in a tree is upper bounded by a power-law of $k$ with exponent $b$---meaning, $r_k \leq 1/k^b$ $\forall k \in \mathbb{N}$, for $b > 0 \in \mathbb{R}$.
\end{definition}

The above definition is motivated by our observation (Figure~\ref{fig:robustness}) that the \sys sampling/verification algorithm attains power-law acceptance rates in practice.
We now state the theorem (proof in App.~\ref{app:theory_robust}).

\begin{theorem}
\label{thm:scalability}
Using a tree verification algorithm with a \textit{$b$ power-law acceptance rate}, the expected number of tokens $G(n)$ generated by verifying the \sys tree of size $n$ is in $\Omega\big(b\cdot\log(n) / \log(\log(n)\big)$.
\end{theorem}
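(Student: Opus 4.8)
The plan is to exploit that $G(n) = \max_{|\Tau| = n} F(\Tau)$ is a maximum, and that since every $f(v) \ge 0$ it is non-decreasing in $n$ (any tree can be padded with extra leaves without decreasing $F$). Hence it suffices to exhibit a single explicit family of trees of size at most $n$ whose score $F$ already grows like $b\log(n)/\log\log(n)$. I would take the complete $w$-ary tree $\Tau_{w,D}$ of depth $D$, for a width $w \ge 2$ and depth $D$ to be chosen as functions of $n$ and $b$, and then optimize the pair $(w,D)$.

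First I would evaluate $F(\Tau_{w,D})$ in closed form via \Cref{prop:f_tau}. Writing $q := \sum_{i=1}^{w} p_i$, the definition of the score function makes the total score of all nodes at depth $d$ factorize as $\big(\sum_{i=1}^{w} p_i\big)^d = q^d$, so that
\[
F(\Tau_{w,D}) = \sum_{v \in \Tau_{w,D}} f(v) = \sum_{d=0}^{D} q^{d}.
\]
The bridge to the power-law hypothesis is the identity $q = 1 - r_w$: the events ``child $k$ is accepted'' for $k = 1,\dots,w$ are disjoint with union ``some child among the first $w$ is accepted,'' so $\sum_{i=1}^{w} p_i = 1 - r_w \ge 1 - w^{-b}$ by \Cref{def:power_law}. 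Keeping only the last term of the geometric sum and applying Bernoulli's inequality yields
\[
F(\Tau_{w,D}) \;\ge\; (D+1)\,q^{D} \;\ge\; (D+1)\,(1 - w^{-b})^{D} \;\ge\; (D+1)\big(1 - D\,w^{-b}\big).
\]

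The next step is the parameter balance. I would keep the decay factor above $1/2$ by choosing $w$ so that $D\,w^{-b} \le \tfrac12$, i.e.\ $w \ge (2D)^{1/b}$, which gives $F(\Tau_{w,D}) \ge (D+1)/2$ and reduces the problem to pushing $D$ as large as the budget allows. Since $|\Tau_{w,D}| = \frac{w^{D+1}-1}{w-1} \le 2\,w^{D}$ for $w \ge 2$, the size constraint $|\Tau_{w,D}| \le n$ is implied by $D\log w \le \log(n/2)$. Substituting the smallest admissible width $w = \lceil (2D)^{1/b}\rceil$, so that $\log w \approx \tfrac1b\log(2D)$, turns this into the transcendental inequality $D\,\log(2D) \lesssim b\,\log n$. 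Inverting it for the largest feasible $D$ gives $D = \Theta\!\big(b\log(n)/\log(b\log n)\big) = \Omega\!\big(b\log(n)/\log\log(n)\big)$ for fixed $b$, whence $G(n) \ge F(\Tau_{w,D}) \ge (D+1)/2 = \Omega\big(b\log(n)/\log\log(n)\big)$, as claimed.

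I expect the main obstacle to be this final inversion: extracting the $\log\log n$ in the denominator from $D\log(2D) \lesssim b\log n$ with honest constants, while respecting the integrality and the $w \ge 2$ requirement on the width. The regime $2D < 2^{b}$, in which the optimal width saturates at $w = 2$, must be checked separately, but there a binary tree already attains $D \approx \log_2 n$ and the bound holds comfortably. Everything else---the closed-form evaluation of $F$, the disjointness identity $q = 1 - r_w$, and the Bernoulli estimate---is routine.
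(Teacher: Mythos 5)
Your argument is correct and shares its skeleton with the paper's proof (Theorem~\ref{thm_scalability} in Appendix~\ref{app:theory_scalable}): both exhibit a complete $k$-ary tree as a lower-bounding witness, reduce its expected yield to the geometric sum $\sum_{d=0}^{D}(1-r_k)^d$ via the disjointness identity $\sum_{i\leq k}p_i = 1-r_k$, tune the branching factor so that $r_k\cdot D$ stays bounded while the size budget forces $D=\Theta\big(b\log n/\log\log n\big)$, and finally invoke the optimality of the \sys tree over all trees of the given size. Where you genuinely depart is in the key estimate: the paper proves a dedicated Lemma~\ref{lemma1}, showing $\frac{1-(1-x)^m}{x}\geq m/10$ whenever $mx\leq\log 1.9$ via a binomial expansion and a comparison with $2-e^{mx}$, whereas you keep only the last term $(D+1)(1-r_w)^D$ of the sum and apply Bernoulli's inequality --- a more elementary route that also yields the better constant $1/2$ in place of $1/10$ (at the cost of discarding the early terms of the series, which is harmless here). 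The second difference is cosmetic: the paper fixes the width as $k=\lfloor\log(n)^{1/b}\rfloor$ directly in terms of $n$, while you define $w=\lceil(2D)^{1/b}\rceil$ implicitly through the depth and invert $D\log(2D)\lesssim b\log n$; the two choices are asymptotically equivalent, and the caveats you flag (the inversion with explicit constants, integrality, and the $w\geq 2$ saturation regime) are routine for fixed $b$, since the saturation regime occurs only for bounded $n$ and the paper likewise imposes an explicit $b$-dependent threshold on $n$.
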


% We note that this result is in contrast to prior token tree structures, which are limited in the average number of tokens they generate; we present these upper bounds in Table~\ref{tab:upper_bounds}.
% For further discussion and proofs, see Appendix~\ref{XX}.

\begin{figure*}[t]
    \centering
\includegraphics[width=400px]{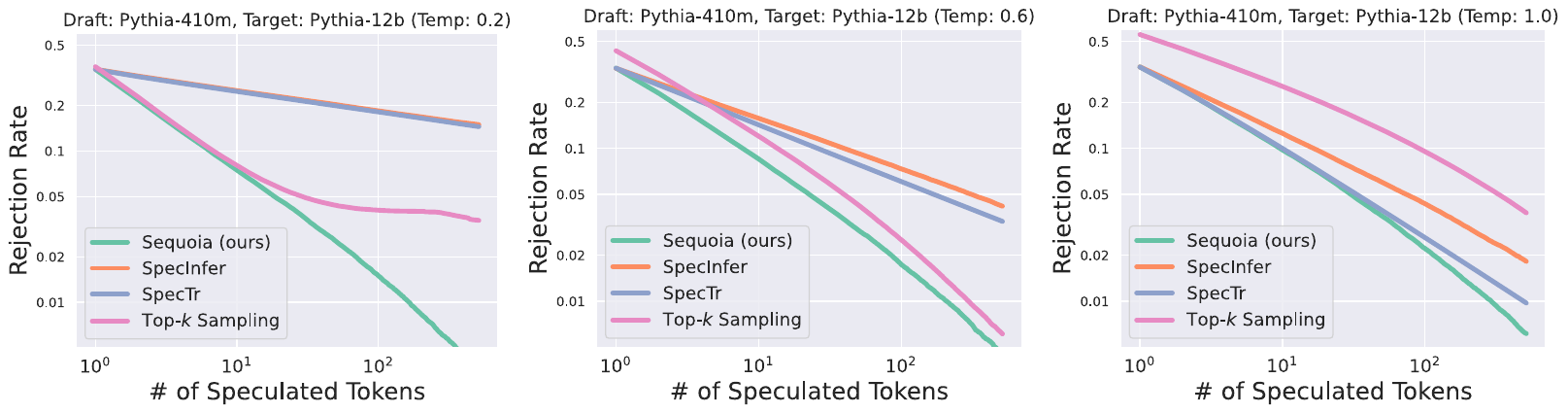}
    \caption{\textbf{Rejection rate vs. number speculated tokens}: We plot the average rejection rate ($1 - acceptance\_rate$) for the different verification algorithms, as a function of the number of speculated tokens $k$.
    Across temperature settings ($\{0.2, 0.6, 1.0\}$, left to right), the \sys verification algorithm attains the lowest rejection rates, and consistently has a \textit{power-law acceptance rate} (Definition~\ref{def:power_law}).
    }
    % \vspace{-4mm}
    \label{fig:robustness}
\end{figure*}

% \begin{table}
% \scriptsize
%     \caption{We present upper bounds on the number of tokens generated in each iteration of a tree-based speculative decoding algorithm, assuming that $P_k$ denotes the probability that if a node has $k$ children, that one of those children will be accepted.}
%     \renewcommand\arraystretch{1.1} 
%     \begin{center}
%     \begin{small}
%     \begin{sc}
%     \begin{tabular}{l|l}
%         \toprule
%         Tree structure & Upper bound \\
%         \midrule
%         Sequence & $\frac{1}{1-P_1}$\\
%         $k$ independent sequences &  $1 + \frac{P_k}{1-P_1}$\\
%         % 2 tokens + $k$ ind. sequences &  $1 + \frac{1}{1-P_1}$\\
%         Binary tree &  $\frac{1}{1-P_2}$\\
%         % Tertiary tree &  $\frac{1}{1-P_3}$\\
%         $k$-ary tree &  $\frac{1}{1-P_k}$\\
%         \sys (ours) & $\infty$ \\
%         \bottomrule
%     \end{tabular}
%     \end{sc}
%     \end{small}
%     \end{center}
%     \vspace{-0.15in}
%     \label{tab:upper_bounds}
% \end{table}

\subsubsection{Empirical Validation}

In Figure~\ref{fig:intro}, we plot the average number of tokens generated by \sys trees relative to various baseline tree structures, as a function of the number of tokens $n$ in the tree, using Pythia-2.8B as a draft model for Pythia-12B, and WikiText-103.
We see that the number of generated tokens for \sys trees is unbounded---scaling roughly logarithmically with the tree size---whereas the other tree structures asymptote.
We show results for more draft/target model pairs in Figure~\ref{fig:scalability} in Appendix~\ref{app:scalability-extra-results}.

% \footnote{Algorithm~\ref{alg:Sequoia-verify} demonstrates how our verification algorithm chooses which (if any) of $k$ ``child tokens'' of a given token to accept. As discussed in Section~\ref{sec:background_tree}, we can use this algorithm to verify an entire token tree by applying it recursively, starting at the root node.}

% ====================== BEGINNING OF COMMENTED OUT ROBUSTNESS SECTION ========================
\subsection{Tree sampling and verification}
\label{sec:method_tree_verify}
We present our token tree sampling and verification algorithm, and prove it is the first such algorithm to satisfy two important robustness properties, while maintaining the target model's output distribution.

\subsubsection{Algorithm}
We present the pseudocode for the \sys Tree sampling and verification algorithm in Algorithm~\ref{alg:sequoia-verify}. As discussed in Section~\ref{sec:background}, an important motivation for designing the \sys verification algorithm was the observation that SpecInfer and SpecTr both perform poorly at low temperatures, due to the fact that they can repeatedly sample (and then reject) a low-quality token that the draft model is confident in.
Thus, we wanted to design an algorithm that would never make the same mistake twice---meaning, once a token was rejected, it would never propose that token again.
Toward this end, \sys introduces two changes to the SpecInfer algorithm (shown in \textcolor{blue}{blue} text in Algorithm~\ref{alg:sequoia-verify}):
First, it performs sampling \textit{without replacement} using the draft model distribution.
Second, if all the tokens with non-zero draft model probability have already been sampled and rejected, it uses the uniform distribution over all tokens that have not yet been sampled as the new draft model distribution.
These changes significantly improve the robustness of \sys relative to SpecInfer, while maintaining the guarantee that the output distribution is identical to that of the target model (proof in Appendix~\ref{app:sequoia_correct}).

\begin{algorithm}[ht]
\small
\caption{\small{\sys Sampling and Verification\newline \textit{(The \textcolor{blue}{blue} lines [10-16] distinguish \sys's sampling/verification from SpecInfer's~\citep{miao2023specinfer})}}}
\label{alg:sequoia-verify}
\begin{algorithmic}[1]
\STATE {\bf Input:} Prefix $[x_1, x_2, ..., x_{n-1}]$, target model probabilities $\mathcal{P}(\cdot \;|\; x_{< n})$, draft model probabilities $\mathcal{Q}(\cdot \;|\; x_{< n})$, and number of branches $k \leq vocab\text{\textunderscore}size$.
\STATE {\bf Output:} A token $x$ sampled using \sys.
\STATE Initialize residual $R$ with $\mathcal{P}$, draft $D$ with $\mathcal{Q}$\textcolor{blue}{, and the set of rejected tokens $S$ with $\varnothing$}
\FOR{$i= 1 \rightarrow k$}
\STATE{sample $s_i \sim D$, $r_i \sim \mathrm{Uniform}(0,1)$}
% \IF[{Accept $s_i$}]{$r_i < \frac{R[s_i]}{D[s_i]}$}
\IF{$r_i < \frac{R[s_i]}{D[s_i]}$}
\STATE{\bf Return $s_i$  \qquad \# Accept $s_i$}
% \ELSE[{Reject $s_i$}]
\ELSE
\STATE{$R \leftarrow \text{norm}(\max(R-D, 0))$}
\textcolor{blue}{
\STATE{$D[s_i] \leftarrow 0$}
\STATE{$S\text{.add}(s_i)$}
\IF{sum$(D) = 0$}
\STATE{\# Let $D$ be uniform over non-rejected set}
\STATE{$D[t] \leftarrow 0$ if $t \in S$, else 1}
\ENDIF
\STATE{$D \leftarrow \text{norm}(D)$}
}
\ENDIF
\ENDFOR
% \STATE{sample $x \sim R$}
\STATE{\bf Return $x \sim R$}
% \caption{test}
\end{algorithmic}
\end{algorithm}

\subsubsection{Theoretical Results}
We now prove that the \sys verification algorithm is robust, in the sense that it satisfies both of the properties below, while existing verification algorithms do not.
\begin{itemize}[itemsep=0.1pt,topsep=0pt,leftmargin=*]
    \item \textbf{The \textit{optimal transport} property}: When $k=1$, the acceptance rate is equal to $1-\frac{\|P-Q\|_1}{2}$.\footnote{The SpecTr paper~\cite{sun2023spectr} showed that $1-\frac{\|P-Q\|_1}{2}$ is the acceptance rate attained by the optimal verification algorithm for $k=1$.}
    \item \textbf{The \textit{cover} property}: If the support of the draft model probability distribution $Q$ is of size $k$ and is a superset of the support of the target model probability distribution $P$, at most $k$ speculations will be needed to attain an acceptance rate of 1. Furthermore, if $k$ is equal to the vocabulary size, the acceptance rate should always be 1 as well, regardless of the draft model used.
\end{itemize}

\vspace{4pt}
\noindent Intuitively, satisfying the \textit{optimal transport} property results in strong performance at high temperatures (because $P$ and $Q$ will approach uniform distributions), while satisfying the \textit{cover} property results in strong performance at low temperatures (if top target model token is in the top-$k$ draft model tokens).

We now present our main robustness result (proof in Appendix~\ref{app:theory_robust}):
\begin{theorem}
\label{thm:robustness}
\sys verification satisfies both properties (\textit{optimal transport}, \textit{cover}); SpecInfer \& SpecTr only satisfy the \textit{optimal transport} property; top-$k$ sampling only satisfies the \textit{cover} property.
\end{theorem}

\subsubsection{Empirical Validation}
In Figure~\ref{fig:robustness}, we plot the average rejection rates (equal to $1 - \text{acceptance rates}$) for the different verification algorithms, as a function of the number of speculated child tokens for a fixed token prefix, for various temperatures (0.2, 0.6, 1.0), measured on WikiText-103.
We can see that across all temperature settings, the rejection rates for \sys decay faster than for the other algorithms.
In general, we observe that the rejection rates $r_k$ for \sys follow a power-law, where $r_k \approx 1/k^b$ for some $b > 0$.
We can also see that while SpecTr and SpecInfer perform relatively well at high temperatures, they struggle at lower temperatures, and that the opposite is true for top-$k$ sampling.

\section{Evaluation}
\label{sec:evaluation}
% In this section, we aim to demonstrate that \sys can speed up LLM inference by a large margin in wall-clock time.
% % Specifically, we empirically validate three claims that \sys is scalable, robust, and hardware-ware.
% We first present (Section~\ref{sec:e2e-results}) our end-to-end system results showing total speedup;
% \sys achieves up to $4.04\times$ speedup for Llama2-7B on an A100 and $10.33\times$ speedup for Llama2-70B in the offloading setting on an L40.
% We then present (Section~\ref{sec:ablations}) our more targeted ablations validating our three claims that \sys is scalable, robust, and hardware-aware;
% we show that (1) the \sys tree can generate on average 33\% more tokens than a tree of 16 independent sequences (tree size 512), (2) \sys's sampling and verification algorithm is robust to temperature and top-$p$, consistently outperforming SpecInfer (by up to $1.65\times$) and top-$k$ sampling (by up to $1.27 \times$), and (3) \sys's hardware-aware tree optimizer can select the best tree size and depth for different hardware settings to maximize speedup.
In this section, we aim to demonstrate that \sys can speed up LLM inference by a large margin in wall-clock time.
% Specifically, we empirically validate three claims that \sys is scalable, robust, and hardware-ware.
We first present our end-to-end system results showing total speedup, followed by validating our claims that \sys is scalable and robust.
\begin{itemize}
[itemsep=0.0pt,topsep=0pt,leftmargin=*]
\item In Section~\ref{sec:e2e-results}, we demonstrate \sys's superior end-to-end performance. Specifically, \sys achieves up-to $4.04\times$ speed-up for Llama2-7B on A100 and $9.5\times$ for Llama3-70B on L40 offloading (achieving the latency as low as 0.60 s/token). 
\item In Section~\ref{sec:tree-results}, we show that the \sys tree can generate on average 33\% more tokens than a tree of 16 independent sequences (tree size 512).
\item In Section~\ref{sec:sample-results}, show \sys's sampling and verification algorithm is robust to temperature, consistently outperforming SpecInfer (by up to $1.65\times$) and top-$k$ sampling (by up to $1.27 \times$).

% showing $2.54\times$--$2.98\times$ wall-clock time speed up compared to $2.42\times$--$2.68\times$ from SpecInfer and $2.04\times$--$3.1\times$ from top-$k$ sampling.
% \item In Section~\ref{sec:hardware-results}, we show that \sys's hardware-aware tree optimizer can select the best tree size and depth for different hardware settings to maximize speedup.
% compared to handcraft baselines (which set the tree size as the ratio of computation capacity and HBM bandwidth) and yields up to an extra $1.3 \times$ and $1.4 \times$ speed up on NVIDIA-L40 and NVIDIA-A100.
\end{itemize}

\subsection{End-to-end Results}
\label{sec:e2e-results}

\begin{table*}
    \caption{{\bf On-device results (A100)}:
    The optimal tree configuration and speedup for different pairs of draft and target models, and different temperatures, for \sys vs. SpecInfer.
    We specify the average number of generated tokens per decoding step in parentheses, next to the speedup factor.
    \sys attains up to $4.04 \times$ speedup on an A100. The speed of incremental decoding is {\bf 24.2ms/token} with Huggingface. The draft model speed is 0.5ms/token. TBT refers to time between tokens.}
    \label{tab:A100}
    \centering
    \resizebox{1.0\linewidth}{!}{
    \begin{tabular}{l l c c c c c c c}
        \toprule
        \multirow{2}{*}{\bf{Target LLM}} & \multirow{2}{*}{{\bf Draft Model}} & \multirow{2}{*}{{\bf T}} & \multirow{2}{*}{{\bf Dataset}} & {\bf Tree Config.} & \multirow{2}{*}{{\bf Speedup}} & {\bf TBT} & {\bf SpecInfer} & {\bf Speedup} \\
         & & & & {\bf (size, depth)} & &{ms/token} & {\bf $5\times8$} & {vs SpecInfer}\\
        \midrule
         Llama2-7B & JF68M & 0&    C4 & (128,10)& \bf{4.04 $\times$}(5.08) &6.0& 3.45$\times$(3.96) & 1.17$\times$\\
        Llama2-7B & JF68M &0.6 & C4   & (128,7)&  \bf{3.18$\times$}(3.92) &7.6& 2.47$\times$(2.97) &1.29$\times$\\
        Llama2-7B & JF68M & 0&    OpenWebText & (128,7)& \bf{3.22$\times$}(3.86) &7.5& 2.79$\times$(3.15)  &1.15$\times$\\
        Llama2-7B & JF68M &0.6 & OpenWebText   & (128,6)& \bf{2.71$\times$}(3.33) &8.9& 2.10$\times$(2.54)  &1.29$\times$\\
        Llama2-7B & JF68M & 0&    CNN Daily & (128,7)& \bf{3.41$\times$}(4.05) &7.1& 2.95$\times$(3.27)&1.16$\times$\\
        Llama2-7B & JF68M &0.6 & CNN Daily   & (128,6)&  \bf{2.83$\times$}(3.45) &8.5& 2.11$\times$(2.58)   &1.34$\times$\\
        Llama2-7B & JF68M & 0&    MT Bench & (128,10)& \bf{4.03$\times$}(4.98) &6.0& 3.84$\times$(4.01)&1.05$\times$\\
        Llama2-7B & JF68M &0.6 & MT Bench   & (128,7)&  \bf{3.18$\times$}(3.96) &7.6& 2.45$\times$(2.97)   &1.30$\times$\\
        \bottomrule
    \end{tabular}}
\end{table*}

\begin{table*}
    \caption{{\bf Offloading results (L40)}:
    The optimal tree configuration and speedup for different pairs of draft and target models, and different temperatures, for \sys vs. SpecInfer.
    We specify the average number of generated tokens per decoding step in parentheses, next to the speedup factor.
    \sys attains up to $9.5 \times$ speedup in the offloading setting on an L40. The speed of incremental decoding is {\bf 5.7s/token} with DeepSpeed Zero Inference. TBT refers to time between tokens.}
    \label{tab:L40 offloading normal}
    \centering
    \resizebox{1.0\linewidth}{!}{
    \begin{tabular}{l l c c c c c c c}
        \toprule
        \multirow{2}{*}{\bf{Target LLM}} & \multirow{2}{*}{{\bf Draft Model}} & \multirow{2}{*}{{\bf T}} & \multirow{2}{*}{{\bf Dataset}} & {\bf Tree Config.}  & \multirow{2}{*}{{\bf Speedup}} &\bf{TBT} & {\bf SpecInfer} & {\bf Speedup}\\
         & & & & {\bf (size, depth)} & &s/token& {\bf $16\times48$} & {vs SpecInfer} \\
        \midrule
         Llama2-70B-chat & Llama2-7B-chat & 0&    MT Bench & (768,18)& \bf{8.6$\times$}(10.30) &0.66& 5.7$\times$(7.63)&1.51$\times$\\
        Llama2-70B-chat & Llama2-7B-chat &0.6 & MT Bench   & (768,18)&  \bf{8.4$\times$}(9.91) &0.68& 5.2$\times$(7.03)&1.62$\times$\\

        Llama3-70B-Instruct & Llama3-8B-Instruct & 0&  MT Bench & (768,18)& \bf{9.5$\times$}(11.68) &0.60& 7.0$\times$(9.07)&1.36$\times$\\
        Llama3-70B-Instruct & Llama3-8B-Instruct &0.6 & MT Bench   & (768,18)&  \bf{9.3$\times$}(11.37) &0.61& 6.1$\times$(8.29)&1.52$\times$\\
        
        % Llama2-70B & Llama2-7B & 0&    OpenWebText & (768,18)& \bf{8.0$\times$}(9.83) &0.69& 5.4$\times$(7.36)\\
        % Llama2-70B & Llama2-7B &0.6 & OpenWebText   & (768,19)&\bf{7.2$\times$}(9.05) &0.76& 4.5$\times$(6.18)\\
        % Llama2-70B & Llama2-7B & 0&    CNN Daily & (768,17)&\bf{8.6$\times$}(10.46) & 0.64&5.8$\times$(7.87)\\
        % Llama2-70B & Llama2-7B &0.6 & CNN Daily   & (768,18)&\bf{7.9$\times$}(9.58) &0.70& 4.6$\times$(6.24)\\
        \bottomrule
    \end{tabular}}
\end{table*}

We now demonstrate that \sys speeds up LLM decoding in the on-device setting by up $4.04\times$ on an A100 GPU, and up to $9.5\times$ with offloading on an L40 GPU.

\vspace{-2mm}
\paragraph{Setup.}
Our experiments are based on Llama and Vicuna models.
For the on-device setting, we use JackFram/Llama-68m (JF68m)~\cite{miao2023specinfer} and princeton-nlp/Sheared-Llama-1.3B (SL1.3B)~\cite{xia2023sheared} as the draft models, and Llama2-7B~\cite{touvron2023llama}, Llama2-13B, and Vicuna-33B~\cite{vicuna2023} as the target models.
For the offloading setting, we use Llama2-7B-chat/Llama3-8B-Instruct as the draft model and Llama2-70B-chat/Llama3-70B-Instruct as the target model.
We evaluate our results on C4(en)~\cite{2019t5} validation dataset, OpenWebText~\cite{Gokaslan2019OpenWeb}, CNN DailyMail~\cite{see-etal-2017-get} and MT Bench~\cite{zheng2023judging}. In each experiment, we use 200 examples to measure the acceptance rate vector (mentioned in~\Cref{sec:method_tree_construct}) and sample another 200 examples for evaluation (50 for offloading). The prompt length and generation length are both set to 128 tokens except MT Bench.
We evaluate \sys on different hardware including on-device experiments on L40 and A100(-PCIE 80GB) GPUs, as well as offloading experiments on an L40 GPU (with PCIE4).
We also compare \sys with SpecInfer~\cite{miao2023specinfer} with $5\times8$ trees (5 independent sequences of length 8, the tree structure used in~\cite{miao2023specinfer} for batch size 1) for the on-device setting, and $16\times48$ trees for the offloading setting. 

\vspace{-2mm}
\paragraph{Implementation Details.}
We implement the draft and target models using Transformers~\cite{wolf2019huggingface}.
Because we determine the optimal tree structure in advance, we are able to use PyTorch CUDA graphs~\cite{cuda,paszke2019pytorch} to reduce the overhead of kernel launching during speculative decoding.
To accelerate sampling without replacement---which is not efficient in PyTorch 2.1~\cite{paszke2019pytorch}---we use the exponential-sort algorithm~\cite{vieira2014gumbel}, combined with PyTorch CUDA graphs~\cite{cuda,paszke2019pytorch}. For offloading setting, we used an DeepSpeed-Zero-Inference~\cite{aminabadi2022deepspeed} as baseline, which is 5.7 s/token. 

\vspace{-2mm}
{\paragraph{Hardware-Aware Optimization.} For each hardware setting we consider in our experiments, we use the following method for selecting the size and depth of the Sequoia tree we should use to maximize speedups, while avoiding doing an exhaustive grid search.
Letting $G(n,d)$ denote the expected number of tokens generated by verifying the \sys tree of size $n$ and depth $d$ (computed via dynamic programming), $t(n)$ denote the (hardware-dependent) amount of time it takes the target model to verify $n$ tokens divided by the time to verify 1 token, and $c$ denote the (hardware-dependent) time to draft $1$ token divided by the time to verify 1 token, the speedup attained by \sys can be expressed as $\texttt{Speedup}(n,d) = \frac{G(n,d)}{t(n) + d \cdot c}$.
We measure $t(n)$ and $c$ empirically for each type of model and inference hardware, and then search over possible values of $n$, $d$ to find the pair that gives the largest speedup.
% We can observe that prior works can get away with setting $t(n)=1$, because the optimal choice of $n$ typically occurs well before $t(n)$ starts growing meaningfully larger than $1$.
% However, this assumption is not always true with \sys.
% In \sys, the fact that we have improved $G(n,d)$ so that it grows (slight sub-)logarithmically in $n$, necessitates hardware-aware modeling of $t(n)$.}
% \begin{wrapfigure}{r}{210px}
%     \centering\includegraphics[width=160px]{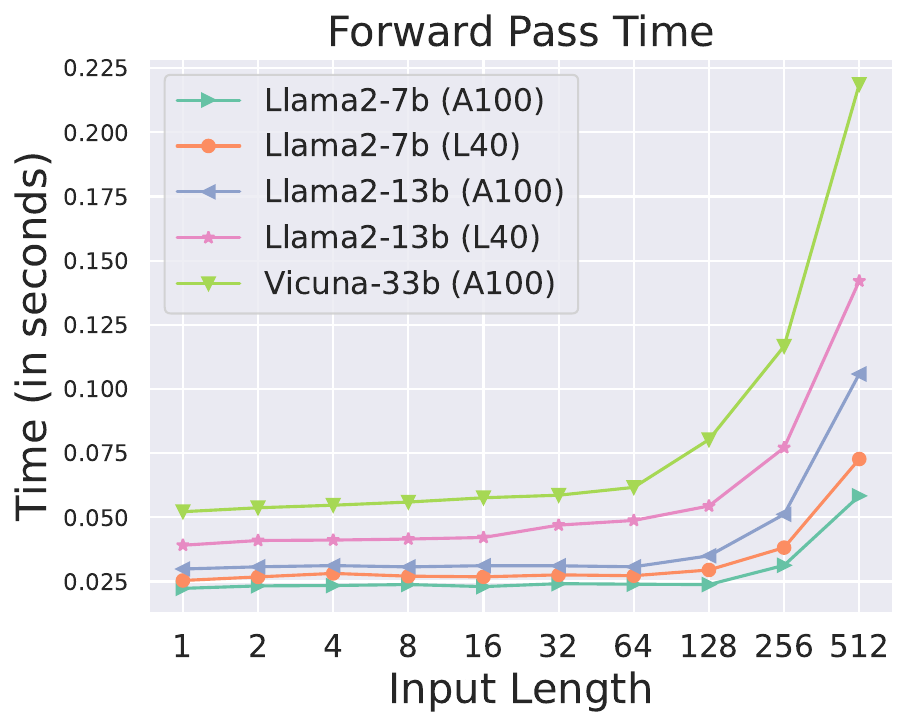}
%     \vspace{-2mm}
%     \caption{Forward pass times for different model/hardware combinations as a function of the number of tokens $n$ being processed.
%     We use these values to choose the optimal tree.}
%         \vspace{-2mm}
%     \label{fig:t_n}
% \end{wrapfigure}

\paragraph{Main Results.}
We evaluate \sys using different temperatures, draft and target model pairs, and hardware configurations.
Results are shown in~\Cref{tab:A100} (A100 on-device) and~\Cref{tab:L40 offloading normal} (L40 offloading).
We observe that \sys consistently speeds up LLM decoding in a wide range of settings.
\sys reaches up to $4.04 \times$ speedup for the on-device setting, and up to $9.5 \times$ speedup for the offloading setting, as a result of the huge gap between computation capacity and memory bandwidth. Notably, for the offloading setting on L40, \sys can achieve as low as 0.60 s/token latency.
We present additional on-device results (A100 and L40) in Appendix~\ref{app:extra_experiments}.

\paragraph{Analysis.} We made several interesting observations on the interplay between \sys tree construction, sampling and verification, and hardware-aware optimizer.
(1) \sys selects much larger trees in the offloading setting (768 tokens) than in the on-device setting (64 to 128 tokens). 
(2) In general, the average number of generated tokens is close to the wall-clock time speedup (especially when JF68M is used as the draft) as a result of the hardware-aware tree optimizer.
(3) The optimal trees found by \sys for slightly different configurations---e.g., different temperatures and model pairs---can be very different from one another.
(4) \Sys chooses deeper trees at low temperature than high temperature, due to the acceptance rates being higher for low temperature.
% (5) The optimal tree size for larger models such as 33B or 13B is generally smaller than for smaller models because the verification time increases faster with the number of candidate tokens, as  illustrated in Figure~\ref{fig:t_n}.
% We notice that performance on the C4(en) validation dataset is significantly better than on other evaluation data.
% All ablation studies in~\Cref{sec:tree-results}~\Cref{sec:sample-results} and~\Cref{sec:hardware-results} are conducted on CNN Daily.

\subsection{Ablations}
\label{sec:ablations}

We present our ablation experiments validating the scalability of the \sys tree construction algorithm (Section~\ref{sec:tree-results}), and the robustness of \sys tree sampling and verification algorithm (Section~\ref{sec:sample-results}).
% , and the hardware-awareness of the \sys tree optimizer (Section~\ref{sec:hardware-results}).
For each of these experiments, we only vary one element at a time (e.g., the tree structure for Section~\ref{sec:tree-results}) to study the gains attained by each component of \sys.

\begin{figure}
    \centering
    \includegraphics[width=0.4\textwidth]{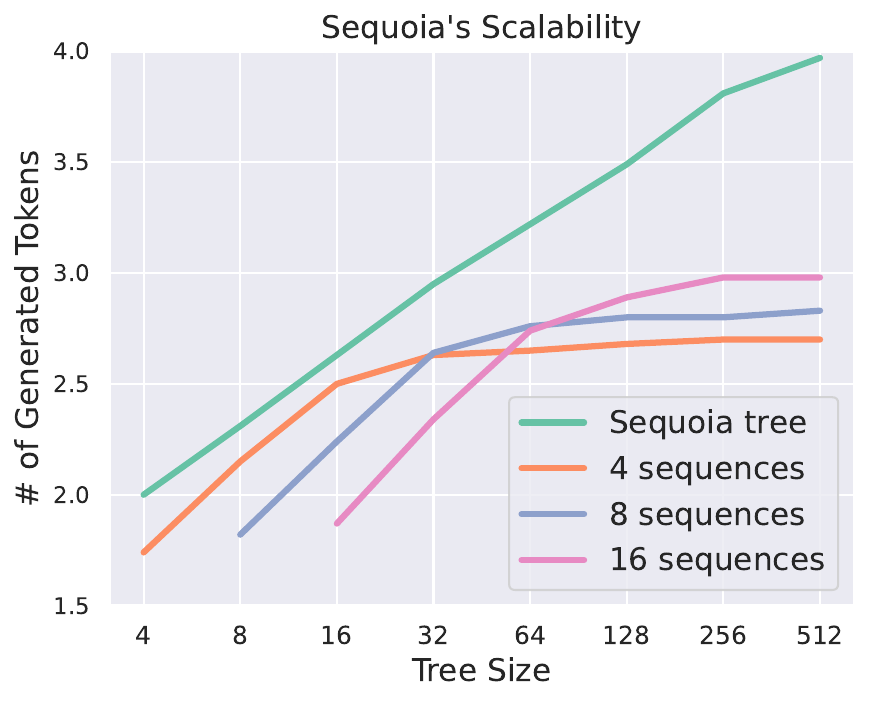}\hspace{30px}\includegraphics[width=0.41\textwidth]{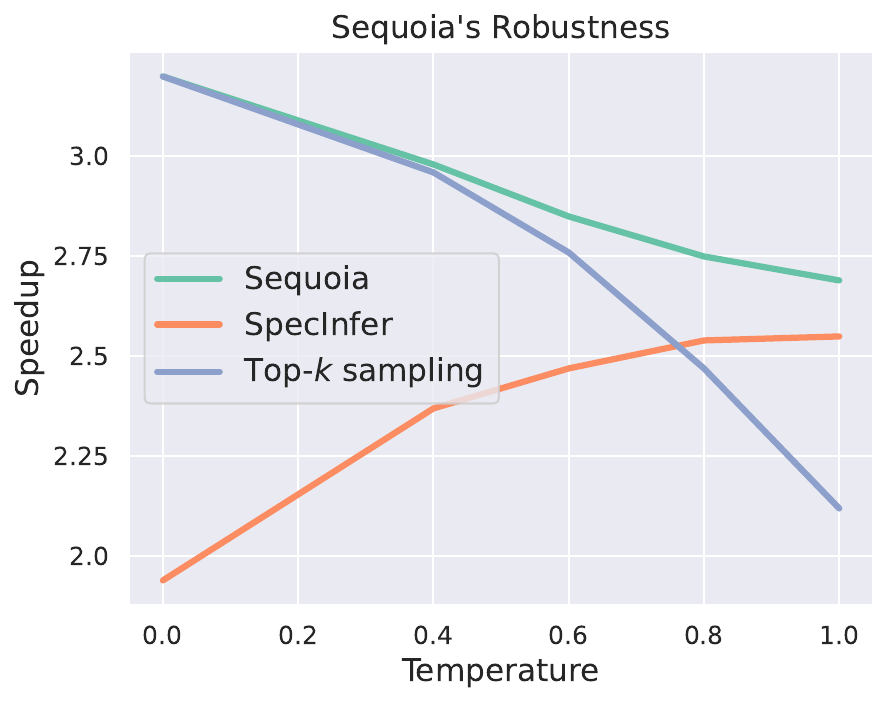}
    \caption{
    {\bf Left}: We compare the number of tokens generated on average by \sys trees vs. $k$ independent sequences, where we use \sys sampling and verification for both tree structures.
    % The gap between \sys trees and the baselines demonstrates the improved scalability of the \sys tree construction algorithm.
    {\bf Right}: We compare the speedups attained by the \sys sampling and verification algorithm relative to SpecInfer and top-$k$ sampling, across various temperatures, holding the tree structure fixed.
    % We can see \Sys is robust to the choice of temperature, performing well across the different settings.
    }
    % \vspace{-5mm}
    \label{fig:ablation_scalable_robust}
\end{figure}

\subsubsection{The Scalability of \Sys}
\label{sec:tree-results}

% \begin{wrapfigure}{r}{145px}
% \vspace{-10mm}
%     \centering\includegraphics[width=145px]{icml2024/figures/sec4_scalability.pdf}
%     % \vspace{-2mm}
%     \caption{Scalability of \newline \sys trees.}
%     \vspace{-4mm}
%     \label{fig:ablation-scalability}
% \end{wrapfigure}
In~\Cref{fig:ablation_scalable_robust} (left) we compare the average number of generated tokens for the \sys tree construction method, relative to $k$ independent sequences, at different budgets; 
we use \sys's sampling and verification algorithm for all trees.
The \sys tree is able to generate up to $33\%$ more tokens per decoding step, demonstrating the effectiveness of \sys's tree construction algorithm.
Here, we use JackFram/Llama-68m as the draft model, Llama2-13B as the target model, $0.6$ as the temperature, and CNN Daily Mail as the dataset.

\subsubsection{Robustness of \Sys Sampling Algorithm}
\label{sec:sample-results}

% \begin{wrapfigure}{r}{145px}
% \vspace{-6mm}
%     \centering\includegraphics[width=145px]{icml2024/figures/sec4_robustness.pdf}
%     % \vspace{-2mm}
%     \caption{Robustness of \sys trees.}
%     \vspace{-4mm}
%     \label{fig:ablation-robustness}
% \end{wrapfigure}
In~\Cref{fig:ablation_scalable_robust} (right) we compare the \sys sampling and verification algorithm to SpecInfer and top-$k$ sampling across different temperature values, holding the tree structure fixed.
We can see that \sys achieves the largest speedups across all temperatures, attaining up to $1.65 \times$ and $1.27 \times$ speedup relative to SpecInfer and top-$k$ sampling, respectively.
Here, we use JackFram/Llama-68m as the draft model, Llama2-7B as the target model, CNN Daily Mail as the dataset, and the corresponding \sys tree from~\Cref{tab:A100} (temperature $0.6$) as the tree structure.
In~\Cref{tab: sampling results} in Appendix~\ref{app:robustness-extra-results}, we additionally show that the \sys sampling/verification algorithm is robust to the top-$p$ parameter.

\section{Conclusion}
We presented \Sys, a scalable and robust speculative decoding method.
By improving the topology of the token tree and the sampling algorithms, \Sys is able to speed up autoregressive LLM inference up to $4.04\times$ on GPU and $9.5\times$ with offloading.
In addition to providing real speedups, we believe \Sys also provides insight into both the large potential and fundamental limits of speculative decoding systems.
% In presenting \sys, we hope to 
% In the process of designing \Sys, we hope we gained a deeper understanding of the fundamental limits of speculative decoding systems.
We hope that this understanding inspires future work in this area, or even informs the design of custom chips for LLM inference.
%\import{src/}{intro.tex}
% \import{src/}{related_work.tex}
% \import{src/}{obs.tex}
% \import{src/}{theory.tex}
% \import{src/}{experiments.tex}
% \import{src/}{algo.tex}

% \import{src/}{conclusion.tex}
\section*{Acknowledgments}
We thank Xinyu Yang, Harry Dong, Ranajoy Sadhukhan, Hanshi Sun, Silong Yong and the anonymous reviewers for their helpful discussions and feedback on the paper. This work was partially supported by the National Science Foundation under grant numbers CNS-2147909, CNS-2211882, and CNS-2239351, along with gift awards from Amazon, Cisco, Google, Intel, Li Auto, Meta, Moffet AI, Oracle, Qualcomm, and Samsung.
\bibliography{ref}
\bibliographystyle{plainnat}
%%%%%%%%%%%%%%%%%%%%%%%%%%%%%%%%%%%%%%%%%%%%%%%%%%%%%%%%%%%%

\newpage
\appendix
\section{Broader Impacts}
\label{app:broader-impacts}
In this paper, we present a new algorithm for accelerating speculative decoding.
While there are numerous application scenarios of large language models that warrant additional study regarding possible societal impact, we would like to highlight that our work does not advance the capabilities of these models.
Our work is primarily an algorithmic study with no specific usage limitations, and while LLMs themselves can be used with malicious purpose, we believe that none of such use cases are specific to this paper.

\section{Limitations}
\label{app:limitations}

\paragraph{Theoretical limitations: } On the theoretical front, there are two primary limitations to our results:
\begin{enumerate}
    \item \textbf{The positional acceptance assumption (Definition~\ref{def:positional_acceptance}}:
    The optimality of our dynamic program depends on this assumption.
    In particular, this assumption states that the \textit{only} factor influencing the acceptance rate for a token is what ``number child'' it is to it's ``parent token'' (e.g., if it is the first or fifth sampled token to follow the ``parent'' token).
    This allows us to model the acceptance dynamics using simple closed form equations, which ignore all contextual factors impacting acceptance rates (e.g., the current prefix, the confidence of the draft model, etc.).
    \item \textbf{The $b$ power law acceptance rate (Definition~\ref{def:power_law})}: While we observe in our experiments that \sys satisfies this assumption (see Figure~\ref{fig:robustness}), it's important to note that need this assumption for our theoretical results on the scalability of \sys trees to hold  (Theorem~\ref{thm_scalability}).
\end{enumerate}

\paragraph{Methodological limitations: } In terms of the limitations of \sys in practice, the most important limitation/challenge is likely that the structure of the optimal \sys tree depends on the exact (average) acceptance rate vector, which depends on the draft/target model pair, temperature value, data domain, etc.
The optimal tree also depends on the batch size, which can be considered by the hardware-aware optimizer.
It is relatively work-intensive to have to measure the acceptance rate vector for each setting, and use this vector to compute the optimal tree.
In practice, we believe computing a single tree for a typical use case can work well for other use cases (e.g., higher/lower temperatures, different data domains), but we leave a more thorough analysis of this issue for future work.

\section{Related Work}
This work introduces a new algorithm in the family of speculative decoding methods that aims to maintain the exact output distribution of the target model by improving the structure and sampling/verification algorithm for the speculated token tree.
There exist many other directions within this line of work---for example, methods which introduce leniency into the speculative decoding algorithm to attain increased speed at the cost of accuracy~\citep{bild,blockwise}, methods that reuse layers or representations from the target model as the draft model~\citep{self_spec,cai2024medusa}, etc.
Alternatively, the draft model can be distilled to better approximate the target model; DistillSpec~\cite{zhou2023distillspec, hinton2015distilling,tang2019distilling,touvron2021training} improves that process by using model-generated data and adjusting the objective depending on the task and the decoding strategy.
Finally, LLMCad~\cite{xu2023llmcad} proposes an advanced algorithm for token tree generation and verification in the context of on-device LLM inference.

In addition to speculative decoding, there exist many other methods aimed at improving the speed of LLM inference.
For example, model quantization is another very promising way of dealing with the I/O bottleneck during inference, by reducing the number of bits per parameter.
However, unlike speculative decoding, these methods generally deteriorate the quality of the model to some degree, depending on the amount of quantization~\cite{han2015deep, jacob2018quantization,nagel2019data,zhao2019improving,awq,gptq,dettmers_8bit} or sparsity~\cite{molchanov2016pruning,liu2018rethinking,hoefler2021sparsity}.

Meanwhile, various works~\cite{frantar2023massive,sun2023simple,yin2023outlier,ainslie2023colt5} have studied ways to improve
LLM serving throughput.
\citet{Pope2022EfficientlyST} investigated the batching effect in scaling up LLM.
Orca~\citep{orca} proposed a distributed LLM serving system that uses a finegrained scheduling policy to improve GPU utilization under
various request lengths.
vLLM~\citep{vllm} used page tables to manage GPU memory to increase memory utilization, which significantly boosts inference throughput.
FlexGen~\citep{flexgen} proposed an offloading mechanism to support larger batches to achieve high throughput.
% However, unlike prior work, in this paper, we delve deep into the intersection between quantization and LLM serving.

FlashAttention~\cite{flash_attn,flash_attn2} is another algorithm that aims to improve the speed of LLMs (at both training and inference time) by considering the I/O cost of different operations.

Another promising approaching to speeding up inference is to change the fundamental building blocks of the model.
Recently, numerous sub-quadratic architectures---including SSMs~\citep{s4, mamba} and linear attention models~\citep{linear_attention}---have been proposed.
These models are particularly beneficial for long inputs.

\section{Background: Sequence-based speculative decoding}
\label{sec:background_seq}
% \red{Perhaps we should cut this section to save space, given that the focus of the paper is tree-based speculative decoding. Alternatively we could just move the algorithm block to the appendix.}
The original speculative decoding method~\citep{leviathan2023fast,chen2023} proposes using a small ``draft model'' to speculate $\gamma$ tokens into the future, and then using the ``target model'' to in parallel process these tokens and decide which of the tokens to ``accept'', in such a way that the output distribution of the target model is unchanged.
This algorithm is presented in Algorithm~\ref{alg:specdec}.

\citet{leviathan2023fast} analyze the performance of this algorithm, presenting equations for the expected number of accepted tokens from one run of the algorithm, and the expected wall-clock speed up from using speculative decoding (relative to standard autoregressive inference with the target model).
In this analysis, they introduce the acceptance rate $\alpha \in [0,1]$, corresponding to the probability that a token $x_i$ is accepted by Algorithm~\ref{alg:specdec}, under the simplifying assumption that the acceptance decisions are i.i.d.\footnote{One can think of $\alpha$ as the average acceptance rate over many runs of this algorithm on a representative dataset.}
Under this assumption, they show that the expected number of generated tokens in each run of Algorithm~\ref{alg:specdec} is $\frac{1-\alpha^{\gamma + 1}}{1-\alpha}$.
Additionally, letting $c$ denote the ratio between the time to run the draft model and the time to run the target model, they show that the expected wall-clock speed-up from using this algorithm is $\frac{1-\alpha^{\gamma + 1}}{(1-\alpha)(\gamma c + 1)}$.

\begin{algorithm}[ht]
%\small
\caption{Sequence-based Speculative Decoding}
\label{alg:specdec}
\begin{algorithmic}[1]
\STATE {\bf Input:} Prefix $[x_1, x_2, ..., x_{n-1}]$, Target model $M_p$, draft model $M_q$, and number of tokens $\gamma$ to speculate.
\STATE {\bf Output:} A sequence of tokens generated using speculative decoding.
\FOR[{Sample sequence of $\gamma$ tokens from draft model}]{$i= n \rightarrow n + \gamma$ - 1}
\STATE{$q_i(x) \leftarrow M_q([x_1, ..., x_{i-1}])$}
\STATE{$x_i \sim q_i(x)$}
\ENDFOR
\FOR[{For loop below can be run in parallel with a single forward pass of $M_p$}]{$i= n \rightarrow n + \gamma$}
\STATE{$p_i(x) \leftarrow M_q([x_1, \ldots, x_{i-1}])$}
\ENDFOR
\STATE{$s \leftarrow n-1$}\COMMENT{Choose how many tokens $n$ to accept}
\FOR{$i= n \rightarrow n + \gamma$ - 1}
\STATE{$r_i \sim \mathrm{Uniform}(0,1)$}
\IF{$r_i < \frac{p_i(x_i)}{q_i(x_i)}$}
\STATE{$s \leftarrow s + 1$}
\ELSE
\STATE{break}
\ENDIF
\ENDFOR
\STATE{$p'(x) \leftarrow p_{s+1}(x)$}
\IF{$t < n + \gamma - 1$}
\STATE{$p'(x) \leftarrow \text{norm}(\max(0, p_{s+1}(x) - q_{s+1}(x)))$}
\ENDIF
\STATE{$t \sim p'(x)$}\COMMENT{Sample a final token from $p'(x)$}
\STATE{\bf Return $x_1, \ldots, x_s, t$}
\end{algorithmic}
\end{algorithm}

% \subsection{SpecInfer}
% \label{app:specinfer}
% Here we present the pseudocode for the SpecInfer algorithm in \Cref{alg:specinfer}.
% \begin{algorithm}[h]
% \caption{SpecInfer Sampling and Verification}
% \label{alg:specinfer}
% \begin{algorithmic}[1]
% \STATE {\bf Input:} Prefix $[x_1, x_2, ..., x_{n-1}]$, target model probabilities $\mathcal{P}(\cdot \;|\; x_{< n})$, draft model probabilities $\mathcal{Q}(\cdot \;|\; x_{< n})$, and number of branches $k$.
% \STATE {\bf Output:} A token $x$ sampled using SpecInfer.
% \STATE Initialize residual $R$ with $\mathcal{P}$ and draft $D$ with $\mathcal{Q}$ 
% %\STATE{$residual \leftarrow P$}
% %\STATE{$draft \leftarrow Q$}
% \FOR{$i= 1 \rightarrow k$}
% \STATE{sample $x_i \sim D$, $r_i \sim \mathrm{Uniform}(0,1)$}
% \IF[{Accept $x_i$}]{$r_i < \frac{R[x_i]}{D[x_i]}$}
% \STATE{\bf Return $x_i$}
% \ELSE[{Reject $x_i$}]
% \STATE{$R \leftarrow \text{norm}(\max(R-D, \;0))$}
% \ENDIF
% \ENDFOR
% \STATE{sample $x \sim R$}
% \STATE{\bf Return $x$}
% \end{algorithmic}
% \end{algorithm}

\newpage
\section{Examples of \sys trees}
\label{app:sequoia-tree-examples}

Below we show more examples of \sys trees of various sizes.
Note that for these plots we do not limit the depth of the tree.
The acceptance rate vector we used for this (shown below) was computed with Llama3-70B-Instruct target model, Llama3-8B-Instruct draft model, on CNN daily news dataset:
\newline
[0.7732, 0.1039, 0.0402, 0.0206, 0.0128, 0.0081, 0.0064, 0.0043, 0.0035, 0.0026, 0.0025, 0.0021, 0.0016, 0.0014, 0.0010, 0.0010, 0.0010, 0.0007, 0.0007, 0.0006, 0.0007, 0.0006, 0.0004, 0.0004, 0.0005, 0.0006, 0.0004, 0.0003, 0.0002, 0.0004, 0.0001].
\vspace{-7px}
\begin{figure}[h!]
    \centering
    \subfloat[8 node \sys tree]{\includegraphics[width=0.4\textwidth]{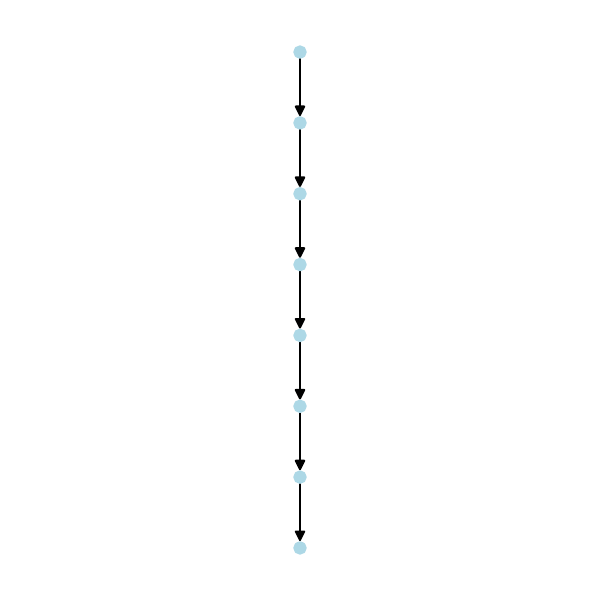}} \hfill
    \subfloat[16 node \sys tree]{\includegraphics[width=0.4\textwidth]{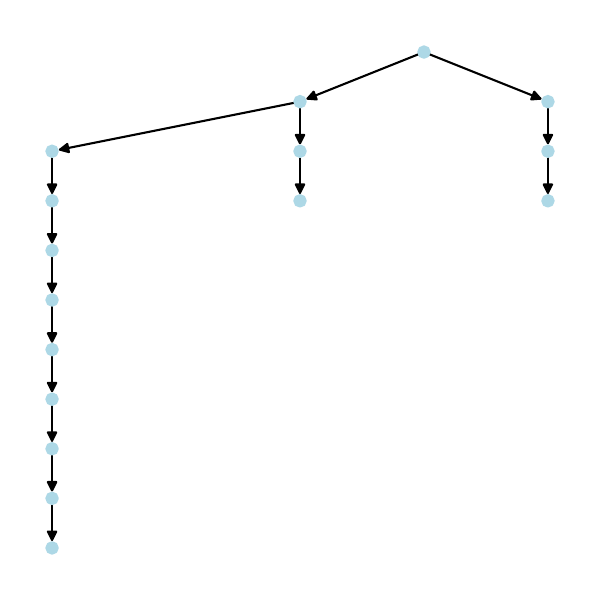}} \\
    \vspace{-7px}
    \subfloat[32 node \sys tree]{\includegraphics[width=0.4\textwidth]{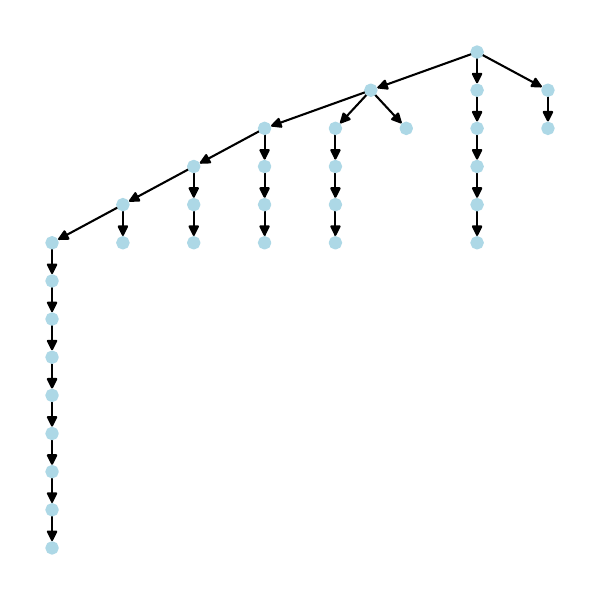}} \hfill
    \subfloat[64 node \sys tree]{\includegraphics[width=0.4\textwidth]{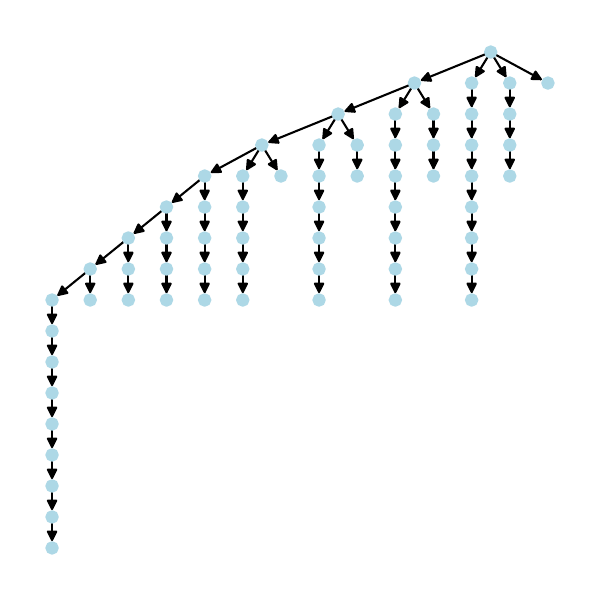}} \\
    \vspace{-7px}
    \subfloat[128 node \sys tree]{\includegraphics[width=0.4\textwidth]{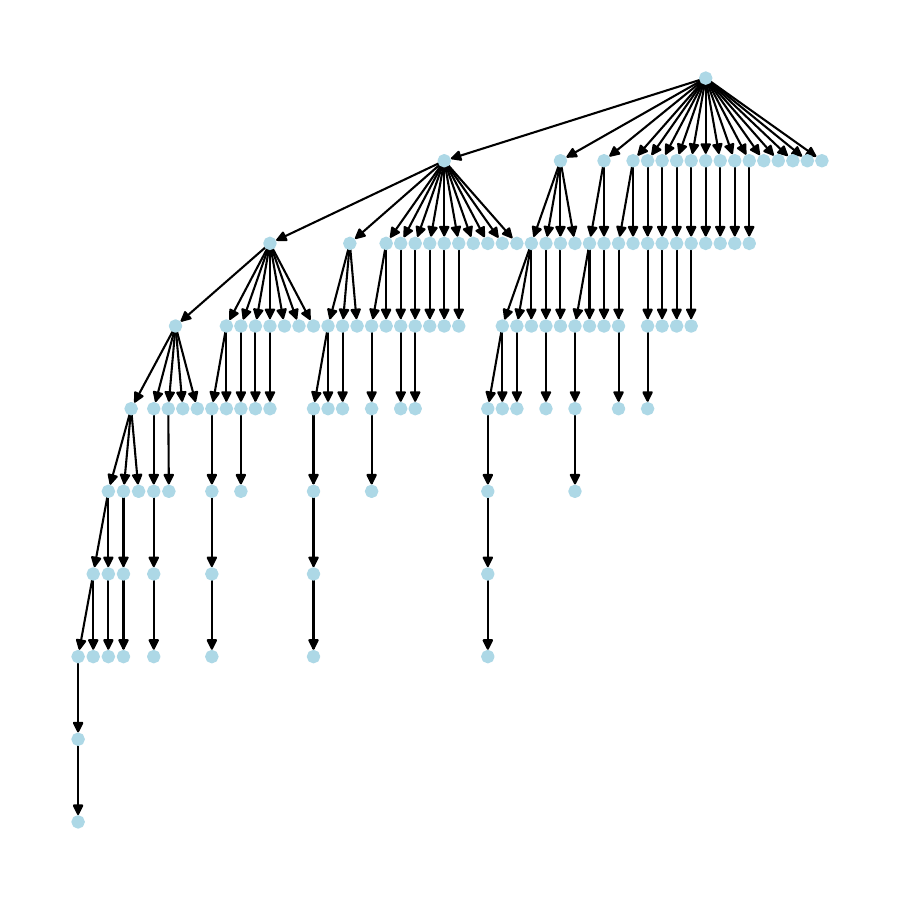}} \hfill
    \subfloat[256 node \sys tree]{\includegraphics[width=0.4\textwidth]{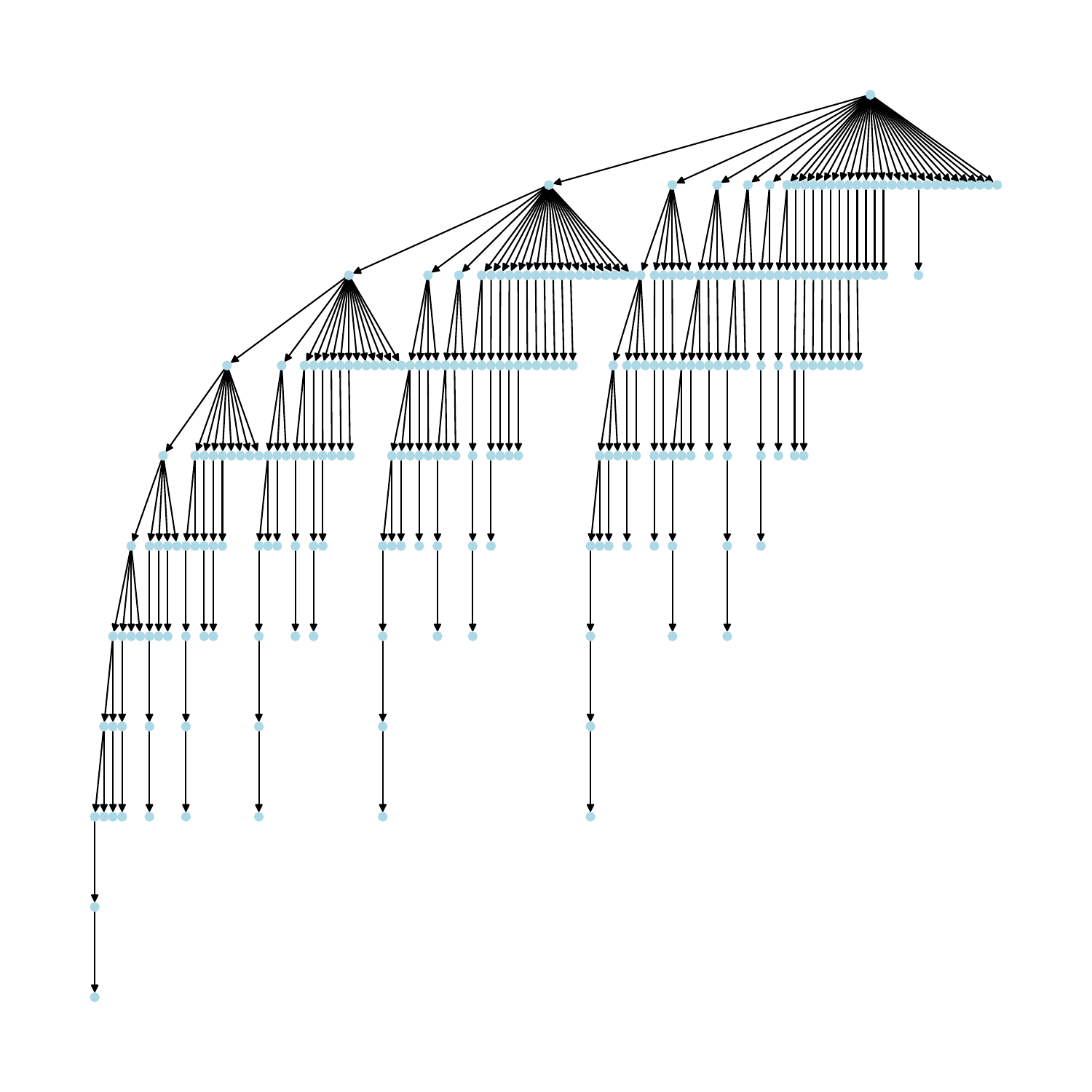}} \\
    \vspace{-2px}
    \caption{A set of increasingly large \sys trees.}
    \label{fig:sequoia-tree-examples}
\end{figure}

\newpage 

\section{Method details and theoretical results}
\label{app:method-and-theory}

We present additional details (as well as proofs for theorems) about the \sys tree construction (Section~\ref{app:theory_sequoia_tree_construction}) and tree sampling and verification (Section~\ref{app:theory_sequoia_tree_verification}) methods.

\subsection{\sys tree construction algorithm}
\label{app:theory_sequoia_tree_construction}

We begin by presenting details about the \sys tree construction algorithm, and its corresponding theoretical properties.

\subsubsection{\sys dynamic program details}
\label{app:dp-details}
In this section, we present an extended version of the \sys tree construction dynamic programming (DP) algorithm (Algorithm~\ref{alg:dp-unbounded}), including a full python implementation of this extended algorithm (Algorithm~\ref{alg:dp-python}).
In Algorithm~\ref{alg:dp-unbounded}, we showed how to compute the expected number of generated tokens for the optimal tree of size $N$ (and branching factor $\leq B$).
Here, we extend the algorithm to be able to handle:
\begin{enumerate}
    \item An upper bound $D$ on the depth of the token tree, and
    \item Self-speculation methods like Eagle~\citep{eagle2024} whose acceptance rates decay for tokens that are deeper in the speculated tree.
\end{enumerate}
We then show how to additionally generate the optimal \textit{tree structure} using dynamic programming, for these more general settings.

\paragraph{Extensions to bounded depth and self-speculation methods: }
To handle the above cases, we assume that we have a 2-D array $p$, where $P[d,\;\;b]$ is the probability of acceptance for a node at depth $d$ and branch number $b$.
Here we assume $p$ is zero-indexed, so depth $0$ corresponds to the direct children of the root node.
We also assume $p$ has shape $(D \;-\; 1, \; B + 1)$, where $D$ is the limit on the depth of the speculated tree, and $B$ is the limit on the branch factor of the tree (max number of children per node).
This allows us to infer that when we are computing $T[n,\;d,\;b]$ (during the internal running of the DP algorithm), in the case where the root node has a depth limit of $D$, the node being considered has depth limit $d$ it must be at depth $D-d$; 
thus, $D-d$ is the index of the $P$ array (at dimension 0) that should be used at that time.
Using this fact, we can show that the recursion equation for Eagle (with bounded depth) is quite similar the one from Equation~\ref{eq:dp_recursion} (and Algorithm~\ref{alg:dp-unbounded}) in Section~\ref{sec:method-sequoia-dp}:

\begin{align*}
T[n, \;\; d, \;\; b] = \max_{1 \le m \le n-1} \;\; \bigg(
    T[n - m, \;\; d, \;\; b - 1] + P[D - d, \;\; b] \cdot \max_{0 \le j \le B} \;\; T[m, \;\; d - 1, \;\; j])
\bigg)  \\ \forall \;\;  2 \leq n \leq N, \;\;\;\; 2 \leq d \leq D, \;\;\;\; 2 \leq b \leq B.
\end{align*}

\paragraph{Constructing the optimal tree structure: }
In the python implementation below of the extended \sys DP algorithm (Algorithm~\ref{alg:dp-python}), we show how to recursively construct the optimal tree structure for each tree size $n$ and depth limit $d$.
Throughout the DP we maintain the following data structures:
\begin{itemize}
    \item $best\_new\_node[n, \;d, \;b]$: A pointer to the root of the best sub-tree to add as the $b^{th}$ child of the tree root with budget n, depth <= d, and b children.
    \item $best\_tree[n, d]$: A pointer to root of the best tree with n nodes and depth <= d.
\end{itemize}
Line 32, and then lines 40-45, demonstrate the recursive relationship between these tree structures:
\begin{itemize}
\item If $m^*$ is the optimal number of tokens that should be assigned to the tree rooted at the $b^{th}$ (and last) child for the tree (For the $(n,\; d, \; b)$ tree), then we can look up the optimal tree of that size in $best\_tree[m^*, \;d-1]$, and set $best\_new\_node[n, \;d, \;b] = best\_tree[m^*, \;d-1]$.
\item If $b^*$ is the optimal number of children for a tree of size $n$ and depth $\leq d$, we can look-up $best\_new\_node[n, \;d, \;b^*]$ (the root of a tree of size $m'$) and assign that as the last child of $best\_tree[n, d]$.
To then find the optimal $(b-1)^{th}$ child of this tree, we can look-up $best\_new\_node[n - m', \;d, \;b^* - 1]$, and we can continue in this manner until we have added all $b^*$ children to $best\_tree[n, d]$.
\end{itemize}

This demonstrates how to build the optimal tree as part of the dynamic program.

\lstset{
  basicstyle=\ttfamily\small,
  breaklines=true,
  frame=single,
  language=Python
}
% [fontsize=\footnotesize, bgcolor=lightgray!0, linenos]{python}
\begin{algorithm}
\caption{\Sys tree construction algorithm: Python implementation}
\label{alg:dp-python}
%\begin{algorithmic}[1]
\begin{lstlisting}
import numpy as np

class Node:
    def __init__(self, children=None):
        self.children = children if children is not None else []
        self.num_nodes_in_tree = 1 + sum(c.num_nodes_in_tree for c in self.children)

def sequoia_tree_construction(acc_rates, max_tree_size, max_tree_depth, max_branch):
    P, N, D, B = acc_rates, max_tree_size, max_tree_depth, max_branch
    if P.ndim == 1:
        P = np.tile(P, (D - 1, 1))
    assert P.shape == (D - 1, B + 1)

    T = np.full(shape=(N + 1, D + 1, B + 1), fill_value=-float('inf'))
    T_max = np.full(shape=(N + 1, D + 1), fill_value=-float('inf'))
    T[1, 1:, 0] = 1.0
    T_max[1, 1:] = 1.0

    # best_new_node[n, d, b] = A pointer to the best node (tree root node) to add
    #     as the b^th child of the tree root with budget n, depth <= d, and b children.
    # best_tree[n, d] = A pointer to root of the best tree with n nodes and depth <= d.
    best_new_node = {(1, d, 0): None for d in range(1, D + 1)}
    best_tree = {(1, d): Node() for d in range(1, D + 1)}

    for n in range(2, N + 1):
        for d in range(2, D + 1):
            for b in range(1, B + 1):
                x = np.nan_to_num(T[n - 1: 0: -1, d, b - 1] + P[D - d, b] * T_max[1: n, d - 1], 
                                           nan=0.0, neginf=-float('inf'))
                T[n, d, b] = np.max(x)
                if T[n, d, b] > 0.0:
                    best_new_node[n, d, b] = best_tree[np.argmax(x) + 1, d - 1]
            T_max[n, d] = np.max(T[n, d, :])

            if T_max[n, d] > 0:
                best_b = np.argmax(T[n, d, :])
                best_n_budget_depth_d_tree_children = []
                remaining_budget = n
                # Find the `best_b` children of the root node, starting with the last.
                for b in range(best_b, 0, -1):
                    next_child = best_new_node[remaining_budget, d, b]
                    best_n_budget_depth_d_tree_children.insert(0, next_child)
                    remaining_budget -= next_child.num_nodes_in_tree
                assert remaining_budget == 1
                best_tree[n, d] = Node(children=best_n_budget_depth_d_tree_children)

    return T, best_tree
\end{lstlisting}
%\end{algorithmic}
\end{algorithm}

\subsubsection{Proof of Proposition~\ref{prop:f_tau}: Closed-form expression for $F(\Tau)$}
\label{app:sequoia_derivation}
We now prove Proposition~\ref{prop:f_tau} by deriving the closed-form expression for $F(\Tau)$ (the expected number of tokens generated by verifying tree $\Tau$), and show how to use dynamic programming to find the optimal tree $\Tau$ under a tree budget size.

\begin{proposition}
Let $\Tau$ be a token tree that is verified with the positional acceptance assumption, and let $f(v)$ denote the score function for a node $v \in \Tau$.
Then the expected number of tokens $F(\Tau)$ generated by verifying $\Tau$ is equal to
$$F(\Tau) = \sum_{v \in \Tau} f(v) \;.$$
\end{proposition}

\begin{proof}
Let $D(\Tau)$ denote the expected number of tokens generated by verifying tree $\Tau$.
We would like to prove that $D(\Tau) = F(\Tau) \; \forall \Tau$.
We will prove this by induction on the size of $\Tau$.

\paragraph{Base case ($N=1$):}
A tree of size 1 is composed solely of the root node.
By definition of the score function $f(v)$ (Definition~\ref{def:score_fn}), we know that $f(v) = 1$ for the root node, so $F(\Tau) = 1$.
$D(\Tau) = 1$ also, because verifying a tree composed of a root node with no children will simply sample from the target model, and generate 1 token.

\paragraph{Inductive step ($N > 1$):}
For $|\Tau| = N > 1$, let $v$ be a leaf of $\Tau$ at child index $i_v$ of depth $d$ with parent $v_p$ and sibling $\mathcal{S}_{v}$ (set of sibling indices).
We can then consider the tree $\Tau' = \Tau - \{v\}$.
Based on the inductive assumption, we know that $g(\Tau') = D(\Tau')$.
Using this assumption, we can express $D(\Tau)$ in terms of $D(\Tau')$:
% \red{Finish explaining this sequence of equalities. $r$ has not been defined.}
\begin{align*}
    D(\Tau) &= D(\Tau') - (d - 1) \cdot f(v_p) \cdot \bigg(1 - \sum_{i \in \mathcal{S}_{v}}p_i\bigg) + (d - 1)\cdot f(v_p) \cdot \bigg(1 - \sum_{i \in \mathcal{S}_{v} \cup \{i_v\}}p_i\bigg) + d \cdot f(v) \\
    &= D(\Tau') - (d-1) f(v_p) p_{i_v} + d \cdot f(v)\\
    &= \sum_{v' \in \Tau'} f(v') - (d-1) f(v) + d \cdot f(v) \\
    &=  F(\Tau') + f(v) \\
    &=  F(\Tau)
\end{align*}
Note that we use the inductive hypothesis, along with the fact the $f(v_p) \cdot p_{i_v} = f(v)$ (by definition of $f(v)$).
\end{proof}

\subsubsection{Proof of Theorem~\ref{thm:scalability}: Main scalability results for \sys trees}
\label{app:theory_scalable}
We now prove that, under certain assumptions on the acceptance rates of the tree verification algorithm, the expected number of tokens generated by verifying the \sys tree is lower bounded by a function which is roughly logarithmic in the size of the tree.
We will do this by showing that a simpler tree---the $k^*(n)$ tree (defined below)---also has this lower bound, and using the fact that the \sys tree is by construction the tree with the largest expected number of generated tokens.

We define the $k^*(n)$ tree to be the $k$-ary tree\footnote{Recall that a $k$-ary tree is one where every non-leaf node has $k$ children.} with $\leq n$ nodes that has the highest expected accepted sequence length.
Letting $G(n)$ denote the expected accepted sequence length for the $k^*(n)$ tree, we will now prove that $G(n) \in \Omega\big(b\log(n) / \log(\log(n)\big)$ (meaning, it is lower-bounded by a scalar multiple of $b\log(n) / \log(\log(n))$), under the assumption that the rejection rate $r_k$ is upper-bounded by a power-law of $k$.
It then follows directly (as a corollary) that the growth rate of the tree generated by the \sys algorithm will also be in $\Omega\big(b\log(n) / \log(\log(n)\big)$.

\begin{theorem}
\label{thm_scalability}
Assume the chance $r_k$ of a token tree verification algorithm rejecting all $k$ speculated tokens ($k$ child nodes of some node in the tree) is upper bounded by a power-law of $k$; so $r_k \leq 1/k^b$ for some $b > 0 \in \mathbb{R}$.
Then the growth rate $G(n)$ for the $k^*(n)$ tree is in $\Omega\big(b\log(n) / \log(\log(n)\big)$.
\end{theorem}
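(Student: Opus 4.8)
The plan is to lower-bound $G(n)$ by exhibiting one explicit family of complete $k$-ary trees and then invoking the fact that, by definition, the $k^*(n)$ tree attains the largest expected accepted length $F(\Tau)$ among all $k$-ary trees with at most $n$ nodes. So it suffices to find a single good choice of $(k,d)$ and evaluate $F$ on the corresponding complete $k$-ary tree.

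First I would obtain a clean formula for $F(\Tau)$ on a complete $k$-ary tree of depth $d$ via Proposition~\ref{prop:f_tau}. The key structural fact is that at any already-accepted node at most one of its $k$ children can itself be accepted, so the positional acceptance events are mutually exclusive and $\sum_{i=1}^{k} p_i = 1 - r_k =: P$. Since the depth-$\ell$ nodes of a complete $k$-ary tree realize every path in $[k]^\ell$, summing the score function layer by layer gives $F = \sum_{\ell=0}^{d} P^\ell = \sum_{\ell=0}^{d}(1-r_k)^\ell$. I would then invoke the power-law hypothesis $r_k \le 1/k^b$ together with the fact that $r \mapsto \sum_{\ell=0}^d (1-r)^\ell$ is decreasing to get $F \ge \sum_{\ell=0}^{d}(1-k^{-b})^\ell = k^b\big(1-(1-k^{-b})^{d+1}\big)$, and finally $1-x \le e^{-x}$ to obtain $F \ge k^b\big(1 - e^{-(d+1)/k^b}\big)$.

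Next I would tie $d$ to the budget: a complete $k$-ary tree of depth $d$ has $\frac{k^{d+1}-1}{k-1} \le n$ nodes, so one may take $d = \Theta(\log_k n) = \Theta(\log n / \log k)$. The remaining task is to choose $k$ to maximize the bound $k^b(1 - e^{-(d+1)/k^b})$. I want $k^b$ as large as possible while keeping the exponent $(d+1)/k^b$ bounded below by a constant, so that the factor $1 - e^{-(d+1)/k^b}$ stays at least $1 - 1/e$. Balancing these two forces $k^b \approx d \approx \log n / \log k$; solving the resulting relation $\log k \cdot e^{b\log k} \approx \log n$ yields $\log k \approx (\log\log n)/b$ and hence $k^b \approx b\log n / \log\log n$. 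Plugging this $k$ back in makes the bracket a positive constant and leaves $F = \Omega\big(b\log n / \log\log n\big)$; since $k^*(n)$ is at least as good as this particular tree, the claim follows.

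The main obstacle is this last optimization step: the trade-off between branching factor and depth is governed by a transcendental equation, and care is needed to (i) verify that the chosen integer $k$ and $d$ actually respect the node-budget constraint with the correct constants after taking floors, and (ii) confirm that the lower-order $\log\log k$ corrections in solving $\log k \cdot e^{b\log k} \approx \log n$ do not degrade the target rate $b\log n/\log\log n$. Everything upstream is routine once the identity $\sum_{i=1}^k p_i = 1 - r_k$ and the geometric-sum form of $F$ are established.
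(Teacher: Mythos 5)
Your proposal is correct and follows essentially the same route as the paper: restrict to a complete $k$-ary tree, reduce $F$ to the geometric sum $\frac{1-(1-r_k)^{d+1}}{r_k}$ with $d\approx\log_k n$, and tune $k$ as a power of $\log n$. The only difference is the balance point — the paper simply sets $k=\lfloor\log(n)^{1/b}\rfloor$ so that $r_k\cdot(d+1)\to 0$ and the bound is carried by the depth $d+1\geq\log_k n = b\log n/\log\log n$, which sidesteps the transcendental equation $k^b\log k\approx\log n$ that you flag as the main remaining obstacle.
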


\begin{proof}
We will let $k(n) = \lfloor \log(n)^{1/b} \rfloor$ denote the branch-width chosen for tree size $n$, and show that under this assumption, the growth rate $G'(n)$ of the corresponding $k(n)$-tree is at least $\frac{b\log(n)}{10\log(\log(n)}$, assuming that $n$ is large enough.
Given that $G'(n)$ is a lower bound on $G(n)$ (because the above choice of $k(n)$ might not be fully optimal), and using the definition of $\Omega$, this proves that $G(n) \in \Omega\big(b\log(n) / \log(\log(n)\big)$.
Note that we will abbreviate $k(n)$ as $k$ in many places throughout the proof, for brevity.

If we let $d$ denote the depth of the tree, the number of nodes in the tree is $1 + k + k^2 + \ldots + k^d = \frac{k^{d+1} - 1}{k - 1}\leq n$. This implies $d \leq \log_k(n)$, which we can prove as follows:
\begin{align*}
&k^{d+1} - 1 \leq n(k-1) \\
\Rightarrow &k^{d+1} \leq nk - n + 1 \leq nk \\
\Rightarrow &d+1 \leq \log_k(nk) = \log_k(n) + 1 \\
\Rightarrow &d \leq \log_k(n)\\
\end{align*}

We can assume $d$ is the largest integer such that $d \leq \log_k(n)$, so it also follows that $d+1 \geq \log_k(n)$.

Letting $\alpha_k \coloneqq 1-r_k$, the expected length $G'(n)$ of the accepted token sequence can be expressed as $1 \cdot (1-\alpha_k) + 2 \alpha_k \cdot (1 - \alpha_k) + 3 \alpha_k^2 (1-\alpha_k) + \ldots + (d+1)\alpha_k^d = 1 + \alpha_k + \alpha_k^2 + \ldots + \alpha_k^d = \frac{1 - \alpha_k^{d+1}}{1 - \alpha_k}$ (the first equality is a result of telescoping sums, the second is from the sum of a finite geometric series).
We will now lower bound this expression, making use of Lemma~\ref{lemma1} (defined and proven below).
\begin{align*}
G(n) \geq G'(n) &= \frac{1 - \alpha_k^{d+1}}{1 - \alpha_k} \\ 
&= \frac{1 - (1-r_k)^{d+1}}{r_k} \\ 
&\geq \frac{d+1}{10} \quad \text{applying Lemma~\ref{lemma1}, and assuming } r_k\cdot(d+1) \leq \log(1.9) \\
&\geq \frac{\log_k(n)}{10} \\
&= \frac{\log(n)}{10\log(k)} \\
&\leq \frac{\log(n)}{10\log(\log(n)^{1/b})} \\
&= \frac{b\log(n)}{10\log(\log(n))} \\
\end{align*}

Now we simply need to understand when $r_k\cdot(d+1) \leq \log(1.9)$:
\begin{align*}
r_k\cdot(d + 1) &\leq \frac{1}{k^b} \Big(\log_k(n) + 1\Big) \\
&\leq \frac{2\log_k(n)}{(\log(n)^{1/b} - 1)^b} \quad \text{using }k(n) = \lfloor \log(n)^{1/b} \rfloor \geq \log(n)^{1/b} - 1 \\
&\leq \frac{2\log_k(n)}{(\frac{1}{2}\log(n)^{1/b})^b} \quad \text{assuming }\log(n)^{1/b} \geq 2 \Leftrightarrow n \geq \exp(2^b) \\
&= \frac{2^{b+1}\log(n)}{\log(k) \log(n)} \\
&= \frac{2^{b+1}}{\log(k)}
\end{align*}

So if $\frac{2^{b+1}}{\log(k)} \leq \log(1.9)$, then it follows that $r_k\cdot(d+1) \leq \log(1.9)$.
\begin{align*}
\frac{2^{b+1}}{\log(k)} &\leq \log(1.9) \Leftrightarrow \frac{2^{b+1}}{\log(1.9)} \leq \log(k) \Leftrightarrow \exp\bigg(\frac{2^{b+1}}{\log(1.9)}\bigg) \leq k \\
\end{align*}

Given that $k(n) = \lfloor \log(n)^{1/b} \rfloor \geq \log(n)^{1/b} - 1$, we know that if $\log(n)^{1/b} - 1 \geq \exp\bigg(\frac{2^{b+1}}{\log(1.9)}\bigg)$, then it must hold that $k(n) \geq \exp\bigg(\frac{2^{b+1}}{\log(1.9)}\bigg)$ as well.
We can see that this holds if:
\begin{align*}
\log(n)^{1/b} - 1 \geq \exp\bigg(\frac{2^{b+1}}{\log(1.9)}\bigg) \Leftrightarrow n \geq \exp\Bigg(\; \Bigg(1 + \exp\bigg(\frac{2^{b+1}}{\log(1.9)}\bigg)\Bigg)^b \;\;\Bigg)
\end{align*}

Thus, we have shown that as long as $n$ is greater than the above expression, then $G'(n) \geq \frac{b\log(n)}{10\log(\log(n)}$.
Because we know that $G(n) \geq G'(n)$, this concludes the proof that $G(n)$ is in $\Omega\big(b\log(n) / \log(\log(n)\big)$.
\end{proof}

We now prove, as a corollary of Theorem~\ref{thm_scalability}, that the growth rate of the \sys tree is also in $\Omega\big(b\log(n) / \log(\log(n)\big)$.
\begin{corollary}
Under the same assumptions on the rejection rates as Theorem~\ref{thm_scalability}, it holds that the growth rate for the \sys tree is in $\Omega\big(b\log(n) / \log(\log(n)\big)$.
\end{corollary}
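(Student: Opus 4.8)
The plan is to exploit the fact that the \sys tree is, by construction, optimal among all trees of a given size, so that its expected number of generated tokens dominates that of the particular $k^*(n)$ tree already analyzed in Theorem~\ref{thm_scalability}. The corollary should then follow by transitivity of the $\Omega(\cdot)$ lower bound, with essentially no new estimation required.

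First I would set up notation consistent with the main text: let $c(n) = \max_{|\Tau| = n} F(\Tau)$, which by Proposition~\ref{prop:f_tau} and the definition of the dynamic program is exactly the expected number of generated tokens for the \sys tree of size $n$. I would then record the monotonicity of $c$: since every score satisfies $f(v) \geq 0$, appending nodes to a tree can never decrease $F$, so $c(n) \geq c(m)$ whenever $m \leq n$. This is the only structural property of the objective I need beyond optimality.

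The one point that genuinely requires care is confirming that the $k^*(n)$ tree is a legitimate competitor in the \emph{same} optimization, i.e. that the ``expected accepted sequence length'' $G(n)$ used in Theorem~\ref{thm_scalability} coincides with $F$ evaluated on that tree under the positional-acceptance score function. I would verify this directly for a complete $k$-ary tree of depth $d$: summing $f(v) = \prod_{i \in \texttt{Path}(v)} p_i$ over the $k^{d'}$ nodes at each depth $d'$ yields $\sum_{d'=0}^{d}\big(\sum_{i=1}^{k} p_i\big)^{d'}$, and identifying $\alpha_k = \sum_{i=1}^{k} p_i = 1 - r_k$ (the $p_i$ being mutually exclusive acceptance events across child positions) recovers exactly the geometric sum $\frac{1-\alpha_k^{d+1}}{1-\alpha_k}$ used inside the proof of Theorem~\ref{thm_scalability}. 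Hence the additive score-function objective and the telescoping sequence-length objective agree on $k$-ary trees, so $F(k^*(n)) = G(n)$.

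With these pieces in place the conclusion is a one-liner: if the $k^*(n)$ tree has $m \leq n$ nodes, then $c(n) \geq c(m) \geq F(k^*(n)) = G(n)$, where the middle inequality holds because the $k^*(n)$ tree is one particular tree of size $m$ while $c(m)$ is the maximum over all such trees. Since Theorem~\ref{thm_scalability} gives $G(n) \in \Omega\big(b\log(n)/\log(\log(n))\big)$, the same bound transfers to $c(n)$, the growth rate of the \sys tree. I expect the main (and essentially only) obstacle to be the bookkeeping in the previous paragraph, namely confirming that the $k$-ary analysis and the additive $F(\Tau)$ formulation measure the same quantity; once that identification is made, the argument reduces to optimality plus monotonicity.
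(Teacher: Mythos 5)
Your proof is correct and follows essentially the same route as the paper's: the \sys tree is by construction optimal over all trees of its size, so its expected number of generated tokens dominates that of the $k^*(n)$ tree, and the $\Omega\big(b\log(n)/\log(\log(n))\big)$ bound from Theorem~\ref{thm_scalability} transfers directly. Your additional bookkeeping (monotonicity of $c(\cdot)$ and the explicit check that $F$ on a complete $k$-ary tree equals the geometric sum $\frac{1-\alpha_k^{d+1}}{1-\alpha_k}$ with $\alpha_k=\sum_i p_i = 1-r_k$) is a welcome tightening of details the paper leaves implicit, but it does not change the argument.
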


\begin{proof}
By construction, for every tree size $n$, the \sys tree is the tree that has the largest expected number of generated tokens.
Thus, for every value of $n$ the expected number of generated tokens for the \sys tree must be larger than that of the $k^*(n)$ tree, which was shown in Theorem~\ref{thm_scalability} to be in $\Omega\big(b\log(n) / \log(\log(n)\big)$.
This concludes the proof.
\end{proof}

We now prove the lemma that we used to prove Theorem~\ref{thm_scalability}:
\begin{lemma}
\label{lemma1}
For any real number $x \in (0, 1]$, and integer $m > 0$ such that $mx \leq \log(1.9)$, it holds that $\frac{1-(1-x)^m}{x} \geq \frac{m}{10}$.
\end{lemma}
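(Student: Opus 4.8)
The plan is to reduce this two-variable inequality to a one-variable inequality and then settle it by a short monotonicity (or Taylor) argument. First I would rewrite the goal $\frac{1-(1-x)^m}{x} \geq \frac{m}{10}$ in the equivalent normalized form $\frac{1-(1-x)^m}{mx} \geq \frac{1}{10}$. The natural first move is to eliminate the power $(1-x)^m$ using the elementary bound $1-x \leq e^{-x}$ (valid for all real $x$), which gives $(1-x)^m \leq e^{-mx}$ and hence $1-(1-x)^m \geq 1-e^{-mx}$. Substituting $t := mx$, which by hypothesis lies in $(0,\log(1.9)]$, it therefore suffices to prove the clean single-variable statement $\frac{1-e^{-t}}{t} \geq \frac{1}{10}$ for every $t \in (0,\log(1.9)]$.

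Next I would establish this single-variable inequality. The route I expect to use is to note that $g(t) := 1-e^{-t}$ is concave on $[0,\infty)$ with $g(0)=0$, so the ratio $g(t)/t$ is non-increasing in $t$; consequently its minimum over $(0,\log(1.9)]$ is attained at the right endpoint $t=\log(1.9)$, where it equals $\frac{1-1/1.9}{\log(1.9)} = \frac{9/19}{\log(1.9)} \approx 0.74 \geq \frac{1}{10}$. A slightly more self-contained alternative is to apply the second-order Taylor bound $1-e^{-t} \geq t - t^2/2$ (which follows by checking that $\phi(t) := 1-e^{-t}-t+t^2/2$ satisfies $\phi(0)=\phi'(0)=0$ and $\phi''(t) = 1-e^{-t} \geq 0$ for $t \geq 0$), giving $\frac{1-e^{-t}}{t} \geq 1-\frac{t}{2} \geq 1-\frac{\log(1.9)}{2} > \frac{1}{10}$, where the last step uses $\log(1.9) < 1$.

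The argument presents no genuine obstacle; the only point requiring a little care is justifying the monotonicity of $t \mapsto (1-e^{-t})/t$, which is exactly the concavity step (for concave $g$ with $g(0)=0$, the difference quotient $g(t)/t$ is non-increasing). It is also worth recording that the constant $\frac{1}{10}$ is deliberately loose: the true value of the ratio stays above $0.7$ throughout the interval, so the hypothesis $mx \leq \log(1.9)$ leaves ample slack and the bound holds with plenty of room to spare, which is all that the calling application in Theorem~\ref{thm_scalability} requires.
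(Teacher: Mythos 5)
Your proof is correct, and it takes a genuinely different route from the paper's. The paper expands $(1-x)^m$ via the binomial theorem, lower-bounds the resulting alternating sum by flipping the positive terms to negative, compares term-by-term with the exponential series, and arrives at $\frac{1-(1-x)^m}{x} \geq m\left(2 - e^{mx}\right) \geq \frac{m}{10}$ using $e^{mx} \leq 1.9$. You instead apply $1 - x \leq e^{-x}$ once (valid here since $x \in (0,1]$ keeps $1-x \geq 0$, so the inequality survives raising to the $m$-th power) to reduce everything to the single-variable bound $\frac{1-e^{-t}}{t} \geq \frac{1}{10}$ on $t \in (0, \log(1.9)]$, which you settle either by the monotonicity of the difference quotient of the concave function $t \mapsto 1-e^{-t}$ or by the Taylor bound $1-e^{-t} \geq t - t^2/2$. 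Your reduction is shorter, avoids the sign bookkeeping in the binomial expansion, and yields the sharper information that the ratio actually stays above about $0.68$ on the whole interval, making explicit how loose the constant $\frac{1}{10}$ is; the paper's argument, while more computational, is self-contained in the sense that it never needs the concavity/monotonicity fact (though it does invoke the exponential series in essentially the same spirit). Both are complete and correct.
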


\begin{proof}
\begin{align*}
\frac{1-(1-x)^m}{x} &=  \frac{1-\bigg(1 - mx + \binom{m}{2}x^2 - \binom{m}{3}x^3 + \binom{m}{4}x^4 - \ldots + (-1)^m x^m\bigg)}{x} \\
&=\frac{mx - \binom{m}{2}x^2 + \binom{m}{3}x^3 - \binom{m}{4}x^4 + \ldots - (-1)^m x^m\bigg)}{x} \\
&= m - \binom{m}{2}x + \binom{m}{3}x^2 - \binom{m}{4}x^3 + \ldots - (-1)^m x^{m-1} \\
&\geq m - \binom{m}{2}x - \binom{m}{3}x^2 - \binom{m}{4}x^3 - \ldots - x^{m-1} \\
&\geq m - \frac{m^2}{2!}x - \frac{m^3}{3!}x^2 - \frac{m^4}{4!}x^3 - \ldots \\
&=  m \bigg(1- \frac{mx}{2!} - \frac{(mx)^2}{3!}  - \frac{(mx)^3}{4!} - \frac{(mx)^4}{5!}- \frac{(mx)^5}{6!} - \ldots \bigg) \\
&=  m \bigg(2 - 1 - \frac{mx}{2!} - \frac{(mx)^2}{3!}  - \frac{(mx)^3}{4!} - \frac{(mx)^4}{5!}- \frac{(mx)^5}{6!} - \ldots \bigg) \\
&=  m \bigg(2 - \Big(1 + \frac{mx}{2!} + \frac{(mx)^2}{3!}  + \frac{(mx)^3}{4!} + \frac{(mx)^4}{5!} + \frac{(mx)^5}{6!} - \ldots \bigg) \\
% &\geq  m \bigg(1- mx - \frac{(mx)^2}{2!}  - \frac{(mx)^3}{3!} - \frac{(mx)^4}{4!}- \frac{(mx)^5}{5!} - \ldots \bigg) \\
% &=  m \bigg(2 - 1 - mx - \frac{(mx)^2}{2!}  - \frac{(mx)^3}{3!} - \frac{(mx)^4}{4!}- \frac{(mx)^5}{5!} - \ldots \bigg) \\
&\geq  m \bigg(2 - \Big(1 + mx + \frac{(mx)^2}{2!}  + \frac{(mx)^3}{3!} + \frac{(mx)^4}{4!} + \frac{(mx)^5}{5!} + \ldots \Big)\bigg) \\
&=  m \big(2 - e^{mx}\big) \\
&\geq \frac{m}{10}  \qquad \text{Assuming } e^{mx} \leq 1.9\text{, which is true by our initial assumption.} \\
\end{align*}
\end{proof}

\subsection{\sys sampling and verification algorithm}
\label{app:theory_sequoia_tree_verification}
We now move on to presenting proofs about the correctness and robustness of the \sys sampling and verification method.

\subsubsection{Proof of correctness for the \sys sampling and verification algorithm}
\label{app:sequoia_correct}
We prove now that the \sys verification algorithm maintains the output distribution of the target model.
We assume we have a target model $t$, and a list of draft models $(d_1, \ldots d_n, d_{n+1}, \ldots)$, where $d_i$ in this case depends on the previously rejected samples $x_1, \ldots, x_{i-1}$, and where $d_i(u)$ and $t(u)$ denote the probabilities of sampling token $u \in V$ from $d_i$ or $t$ respectively (where $V$ is the token vocabulary).
We let $t_{i}$ denote the residual at iteration $i$ of \sys loop, (after $i-1$ nodes have been rejected (so $t_1=t$, as can be seen in Algorithm~\ref{alg:sequoia-verify})

We will prove by induction on the number of proposed tokens $n$ that the \sys verification algorithm is correct.

\noindent \textbf{Base case ($n=0$)}: \sys is trivially correct, as it will simply sample from the residual $t_1$, which is equal to $t$. \\

\noindent \textbf{Recursive case}: We assume \sys is correct for $n-1$ proposed samples and prove it is correct for $n$ proposed samples.

We first show that at stage $i$ in the speculative decoding algorithm, the chance of \sys choosing to reject the proposed sample is equal to $\sum_x \max\Big(0, \;t_i(x)-d_i(x)\Big)$:

\begin{lemma}
\label{lemma}
P(\text{No token accepted at iteration $i$}) = $\sum_x \max\Big(0, \;t_i(x)-d_i(x)\Big)$.
\end{lemma}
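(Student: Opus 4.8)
The plan is to compute the rejection probability at iteration $i$ directly from the one-step accept/reject rule in Algorithm~\ref{alg:sequoia-verify}, and then rewrite the result in the claimed form using only the fact that both the residual and the draft are normalized distributions. Conditioning on having reached iteration $i$ (i.e.\ the first $i-1$ proposals were all rejected), the current residual is $t_i$ and the current draft is $d_i$, both of which are probability distributions over the vocabulary $V$: the residual is renormalized by $\mathrm{norm}(\cdot)$ after every rejection, and $d_i$ is likewise normalized (including in the edge case where \sys replaces an exhausted draft with the uniform distribution over not-yet-rejected tokens).

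First I would fix a sampled token $x \sim d_i$ and note that, since $r_i \sim \mathrm{Uniform}(0,1)$, the chance of accepting $x$ is exactly $\min\bigl(1, t_i(x)/d_i(x)\bigr)$. Averaging over the draw of $x$ gives
$$P(\text{accept at iteration } i) = \sum_x d_i(x)\,\min\Bigl(1, \tfrac{t_i(x)}{d_i(x)}\Bigr) = \sum_x \min\bigl(d_i(x), t_i(x)\bigr),$$
where the token positions with $d_i(x)=0$ contribute $0$ to both sides, so the identity is unaffected by the support of $d_i$.

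Then the rejection probability is $1$ minus the above, and I would convert the $1$ into $\sum_x t_i(x)$ (using that $t_i$ is a normalized distribution) to obtain
$$P(\text{no token accepted at iteration } i) = \sum_x t_i(x) - \sum_x \min\bigl(d_i(x), t_i(x)\bigr) = \sum_x \max\bigl(0, t_i(x) - d_i(x)\bigr),$$
where the last equality uses the pointwise identity $t - \min(d,t) = \max(0, t-d)$. This is exactly the statement of Lemma~\ref{lemma}.

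There is no substantive obstacle here; the result follows from a single-step computation. The only points requiring care are bookkeeping ones: (i) verifying that $t_i$ and $d_i$ are genuine normalized distributions at the start of iteration $i$, which is what justifies replacing $1$ by $\sum_x t_i(x)$; and (ii) checking the entries with $d_i(x)=0$, where a token can never be proposed yet still carries residual mass---these are handled correctly because $\min(d_i(x), t_i(x)) = 0$ there. I would also remark that the equivalent expression $\sum_x \max(0, d_i(x) - t_i(x))$ coincides with the claimed one, since $\sum_x\bigl(t_i(x) - d_i(x)\bigr) = 0$ for two normalized distributions; this reconciles the computation whether one starts from the accept side or the reject side.
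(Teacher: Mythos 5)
Your proof is correct and follows essentially the same route as the paper's: both compute the per-token acceptance chance $\min\bigl(1, t_i(x)/d_i(x)\bigr)$, average over $x \sim d_i$ to get $\sum_x \min\bigl(d_i(x), t_i(x)\bigr)$, and then use the normalization $\sum_x d_i(x) = \sum_x t_i(x) = 1$ together with the pointwise identity $t - \min(d,t) = \max(0, t-d)$. Your added remarks on the $d_i(x)=0$ entries and on why $t_i$ and $d_i$ remain normalized are sensible bookkeeping that the paper leaves implicit, but they do not change the argument.
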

\begin{proof}
\begin{align*}
P(\text{No token accepted at iteration $i$}) &= \sum_x P(\text{sample $x$}) \cdot P(\text{reject $x \;|\; x$ is sampled}) \\
&= \sum_x d_i(x) \cdot \bigg(1 - \min\Big(\frac{t_i(x)}{d_i(x)}, 1\Big)\bigg) \\
&= \sum_x d_i(x) - \sum_x \min\Big(t_i(x), \;d_i(x)\Big) \\
&= \sum_x t_i(x) - \sum_x \min\Big(t_i(x), \;d_i(x)\Big) \\
&= \sum_x t_i(x) + \max\Big(\!-t_i(x), -\;d_i(x)\Big) \\
&= \sum_x t_i(x) - t_i(x) + \max\Big(0, \;t_i(x) - \;d_i(x)\Big) \\
&= \sum_x \max\Big(0, \;t_i(x) - \;d_i(x)\Big) \\
\end{align*}
\end{proof}
We are now ready to prove the recursive case of the \sys algorithm.
By the inductive hypothesis, we know that for all $u \in V$,
\begin{align*}
t(u) = P(\text{$u$ accepted in first $n-1$ iterations}) + P(\text{No token accepted in first $n-1$ iterations})\cdot t_n(u)
\end{align*}

What this means is that in the case where we run \sys for $n-1$ iterations (and if no token is accepted we sample from the residual $t_n$), this is equivalent to sampling from the target distribution $t$ directly.
We would like to show that this output distribution is equivalent to the one we would get if we run \sys for $n$ iterations (and if no token is accepted we sample from the residual $t_{n+1}$).
The output distribution of this scenario can be written as follows:
\begin{align*}
&P(\text{$u$ accepted in first $n-1$ iterations}) + P(\text{No token accepted in first $n-1$ iterations})\cdot  \\ & \bigg(
\ d_n(u)\cdot P(\text{$u$ accepted at iteration $n$}) + P(\text{No token accepted in iteration $n$}) \cdot t_{n+1}(u)\bigg)
\end{align*}

\noindent Thus, all we must show is that
\begin{align*}
t_n(u) = d_n(u)\cdot P(\text{$u$ accepted at iteration $n$}) + P(\text{No token accepted in iteration $n$}) \cdot t_{n+1}(u)
\end{align*}
\\
\noindent We now show this desired result. We will use Lemma~\ref{lemma}, and the fact that by definition of the SpecInfer algorithm (see Algorithm~\ref{alg:sequoia-verify}, ignoring blue lines), we know that $t_{n+1}(u) = \frac{\max\big(0, \;t_n(u) - d_n(u)\big)}{\sum_x \max\big(0, \;t_n(x) - d_n(x)\big)}$.

\begin{align*}
&d_n(u)\cdot P(\text{$u$ accepted at iteration $n$}) + P(\text{No token accepted in iteration $n$}) \cdot t_{n+1}(u) \\
&\quad\quad = d_n(u) \cdot \min\bigg(1, \;\frac{t_n(u)}{d_n(u)}\bigg) + \Bigg(\sum_x \max\Big(0, \;t_n(x)-d_n(x)\Big)\Bigg)t_{n+1}(u) \\
&\quad\quad = \min\bigg(d_n(u), \;t_n(u)\bigg) + \Bigg(\sum_x \max\Big(0, \;t_n(x)-d_n(x)\Big)\Bigg)\cdot \Bigg(\frac{\max\big(0, t_n(u) - d_n(u)\big)}{\sum_x \max\big(0, t_n(x) - d_n(x)\big)}\Bigg) \\
&\quad\quad = \min\bigg(d_n(u), \;t_n(u)\bigg) + \max\bigg(0, \; t_n(u) - d_n(u)\bigg) \\
&\quad\quad = t_n(u)
\end{align*}
To see that this last equality holds, we consider two cases:
\begin{enumerate}
\item Case 1 $\Big(t_n(u) \geq d_n(u)\Big)$: $\min(d_n(u), \;t_n(u)) + \max(0, \; t_n(u) - d_n(u)) = d_n(u) + t_n(u) - d_n(u) = t_n(u)$.
\item Case 1 $\Big(t_n(u) < d_n(u)\Big)$: $\min(d_n(u), \;t_n(u)) + \max(0, \; t_n(u) - d_n(u)) = t_n(u) + 0 = t_n(u)$.
\end{enumerate}

This completes the proof.

\subsection{Proof of Theorem~\ref{thm:robustness}: Main robustness result for \sys sampling and verification}
\label{app:theory_robust}

We now prove the robustness results for the \sys verification algorithm.
\begin{theorem}
The \sys verification algorithm satisfies both the \textit{optimal transport} and the \textit{cover} properties, while SpecInfer and SpecTr only satisfy the \textit{optimal transport} property, and (top-$k$) naive sampling only satisfies the \textit{cover} property.
\end{theorem}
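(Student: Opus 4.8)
The plan is to split the theorem into five independent claims — that \sys satisfies both properties, that SpecInfer and SpecTr satisfy only \emph{optimal transport}, and that top-$k$ sampling satisfies only the \cover property — and to prove each by isolating the relevant regime ($k=1$ for optimal transport; $\operatorname{supp}(Q)$ of size $k$, or $k$ equal to the vocabulary size, for \cover). Two facts carry most of the optimal-transport side. First, at $k=1$ the \sys, SpecInfer, and SpecTr loops coincide: each samples a single $x_1 \sim Q$ and accepts iff $r_1 < P(x_1)/Q(x_1)$, so the acceptance rate is $\sum_x Q(x)\min(1, P(x)/Q(x)) = \sum_x \min(P(x), Q(x))$. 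Second, the identity $\sum_x \min(P(x),Q(x)) = 1 - \tfrac12\|P-Q\|_1$, obtained by summing $|P(x)-Q(x)| = P(x)+Q(x)-2\min(P(x),Q(x))$, turns this into exactly the optimal-transport rate. This settles optimal transport for all three algorithms at once.

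For the negative claims I would exhibit small explicit counterexamples. To show top-$k$ sampling fails optimal transport, observe that at $k=1$ it proposes only the single most likely draft token $x^\star = \arg\max_x Q(x)$ and accepts iff a target sample equals $x^\star$, giving acceptance rate $P(x^\star)$; taking $P=Q=\text{Uniform}\{a,b\}$ makes the required rate $1$ but the actual rate $\tfrac12$. To show SpecInfer and SpecTr fail \cover, I would exploit that they sample \emph{with replacement}: with vocabulary $\{a,b\}$, $k$ equal to the vocabulary size $2$, $Q=(0.9,0.1)$ and $P=(0.5,0.5)$, with probability $Q(a)^2 = 0.81 > 0$ both draws equal $a$, so $b$ is never proposed, the speculated tree is rejected, and the algorithm falls back to the residual; the acceptance rate is then strictly below $1$, contradicting \cover.

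The substantive part — and the step I expect to be the main obstacle — is proving that \sys satisfies \cover. The key structural lemma I would establish is that the residual can never keep mass on a rejected token: whenever $x_i$ is rejected we had $r_i \ge R[x_i]/D[x_i]$ with $r_i<1$, forcing $R[x_i] < D[x_i]$, so the update $R \leftarrow \operatorname{norm}(\max(R-D,0))$ sends $R[x_i]$ to $0$. Together with the monotonicity $\operatorname{supp}(R_{i+1}) \subseteq \operatorname{supp}(R_i) \subseteq \operatorname{supp}(P)$, this gives the invariant $\operatorname{supp}(R_i) \subseteq \operatorname{supp}(P)\setminus\{x_1,\dots,x_{i-1}\}$. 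Because sampling is \emph{without replacement}, the $x_i$ are distinct, so when $\operatorname{supp}(Q)$ has size $k$ and contains $\operatorname{supp}(P)$, after $k-1$ rejections exactly one candidate $x_k$ remains: both $D_k$ and $R_k$ collapse to $\delta_{x_k}$, and the test $\min(1, R_k[x_k]/D_k[x_k]) = 1$ fires with certainty. Hence along every rejection path the final speculation is accepted, so the acceptance rate is $1$. The case $k=$ vocabulary size is the same argument, now using the uniform fall-back to guarantee that every token — even those outside $\operatorname{supp}(Q)$ — is eventually proposed within the $k$ draws, so $\operatorname{supp}(P)$ is always exhausted.

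Finally I would address the one delicate point: the residual could in principle vanish before the last candidate is reached. Here I would invoke the rejection formula $P(\text{no token accepted at iteration } i) = \sum_x \max(0, R_i(x)-D_i(x))$ from Lemma~\ref{lemma} to note that a vanishing residual means $R_i \le D_i$ pointwise, i.e.\ the rejection probability at that step is already $0$ and acceptance is forced — so the normalization of a zero vector is never executed on a positive-probability path. Establishing the residual-zeroing invariant cleanly, and tracking supports through the normalization steps, is where care is required; the optimal-transport half and all of the counterexamples are routine by comparison.
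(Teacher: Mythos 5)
Your overall route is the same as the paper's: split into per-algorithm claims, reduce the optimal-transport cases to $k=1$ (where \sys, SpecInfer, and SpecTr coincide with standard speculative sampling), give small counterexamples for the negative claims, and prove the \sys \emph{cover} property via the invariant that rejected tokens exit the residual together with the fact that sampling without replacement exhausts $\operatorname{supp}(Q)$. Your treatment of the \sys cover property is correct and in fact slightly more careful than the paper's: the observation that a vanishing residual forces $R_i = D_i$ (hence zero rejection probability at that step), so the ill-defined normalization is never reached on a positive-probability path, is an edge case the paper leaves implicit.

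Two points need fixing. First, your counterexample that SpecInfer/SpecTr fail the cover property contains a false step: with $Q = (0.9, 0.1)$ and $P = (0.5, 0.5)$, the event that both draws equal $a$ does \emph{not} imply the tree is rejected, since $P(a) = 0.5 > 0$ means the first draw of $a$ is accepted with probability $\min(1, 0.5/0.9) = 5/9$. Your distributions still work, but you must actually compute the rejection probability (for SpecInfer it is $0.9 \cdot \tfrac{4}{9} \cdot 0.9 \cdot 1 = 0.36 > 0$); the paper's choice $P = (1,0)$, $Q = (0.5,0.5)$ is cleaner because there the repeated token $b$ has $P(b) = 0$ and is always rejected, so ``both draws are $b$'' (probability $0.25$) genuinely implies rejection. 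Moreover, for SpecTr you argue only by analogy with with-replacement sampling; since SpecTr's acceptance rule differs from SpecInfer's, you need to verify the example against that rule, as the paper does by computing $\beta_{p,q}(\gamma) = 0.5$ and acceptance rate $1 - (1-\beta_{p,q}(\gamma))^2 = 0.75 < 1$. Second, the theorem also asserts that top-$k$ sampling \emph{does} satisfy the cover property, and your proposal never proves this; it is easy --- if $|\operatorname{supp}(Q)| = k$ then the top-$k$ draft tokens are exactly $\operatorname{supp}(Q) \supseteq \operatorname{supp}(P)$, and verification samples from $P$, so the sample is always among the proposed children and is accepted, and likewise when $k$ equals the vocabulary size --- but it is a claim of the theorem and must be stated.
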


\begin{proof}
This proof is quite straightforward:
\begin{itemize}
    \item \textbf{\sys satisfies the optimal transport property}:
    It is clear that \sys satisfies the optimal transport property, because at $k=1$, it is identical to the original speculative decoding algorithm~\citep{leviathan2023fast}.
    \item \textbf{\sys satisfies the cover property}:
    To see why \sys satisfies the cover property, we will use the following two facts:
    \begin{itemize}
        \item If the support of $Q$ is of size $k$ and $k$ tokens are speculated by the draft model, the set of speculated tokens will always exactly equal the $k$ tokens in the support of $Q$ (because \sys does sampling without replacement from the draft model).
        \item During the verification for-loop in Algorithm~\ref{alg:sequoia-verify}, the support of the residual will always be contained in the support of $P$ intersected with the set of tokens that have not yet been rejected. This is because the support of the residual can never grow (because $p_i(x) = 0 \Rightarrow p_{i+1}(x) = norm(max(p_i-q_i, 0))(x) = 0$, where $p_i$ and $q_i$ denote the residual and draft probabilities at iteration $i$, respectively), and because if a token $x$ is rejected it will ``exit'' the residual (because $x$ is rejected implies $q_i(x) > p_i(x)$ which implies that $p_{i+1}(x) = norm(max(p_i - q_i, 0))(x) = 0$). 
    \end{itemize}
    Combining these two facts, we can see that if the first $k-1$ tokens were rejected, then the $k^{th}$ token must be accepted, because the residual must be a one-hot vector with probability 1 at the only remaining token, and the (updated) draft probabilities will also be this same one-hot vector (and thus, accepted with probability 1).
    Additionally, we can see that if $V$ tokens are sampled (where $V$ is the vocab size), these must exactly equal the $V$ tokens in the vocabulary, and thus one of those tokens must be accepted.
    In the case where the support of $Q$ is equal to the full vocabulary, this result follows directly from the discussion above.
    In the case where the support of $Q$ \textit{does not} equal the full vocabulary, this is a result of the fact that once all tokens in the support of $Q$ have been sampled and rejected, we begin sampling (without replacement) from the uniform distribution over all non-rejected tokens.
    \item \textbf{SpecInfer satisfies the optimal transport property}:
    For $k=1$, SpecInfer is identical to the original speculative decoding algorithm~\citep{leviathan2023fast}.
    \item \textbf{SpecInfer does not satisfy the cover property}:
    It is easy to see that SpecInfer does not satisfy the cover property, with the following counter-example.
    Let $Q = [0.5, 0.5]$ and $P=[1.0, 0]$. We can see that the support of $Q$ is of size 2 and contains the support of $P$.
    But with probability 25\%, SpecInfer will sample the second token twice in a row, and will reject both of them.
    \item \textbf{SpecTr satisfies the optimal transport property}:
    For $k=1$, SpecTr is identical to the original speculative decoding algorithm~\citep{leviathan2023fast}, because $\gamma=1$ by definition.
    \item \textbf{SpecTr does not satisfies the cover property}:
    We can show that SpecTr (in particular, the `$k$-sequential selection' algorithm from~\citep{sun2023spectr}) does not satisfy the cover property, with the following counter-example.
    Let $P = [1, 0]$ and $Q=[0.5, 0.5]$. Then $\beta_{p,q}(\gamma) = \sum_{x=0}^1 \min(Q(x), P(x)/\gamma) = \min(0.5, 1/\gamma) + \min(0.5, 0/\gamma) = 0.5$ (because $\gamma \in [1,2]$ by assumption).
    We know the acceptance rate of SpecTr is $1-(1-\beta_{p,q}(\gamma))^2 = 1 - (1-0.5)^2 = 0.75 \neq 1$.
    Thus, SpecTr does not satisfy the cover property.
    \item \textbf{Top-$k$ naive sampling does not satisfy the optimal transport property}:
    Letting $Q = [0.6, 0.4]$ and $P=[0.6, 0.4]$, we can see that top-$k$ naive sampling will accept with probability 0.6, whereas $1-\|P-Q\|/2 = 1.0$.
    \item \textbf{Top-$k$ naive sampling satisfies the cover property}:
    It's easy to see that if the support of $Q$ is of size $k$ and contains the support of $P$, then top-$k$ naive sampling will always accept (because it will sample from the target model and accept if the sampled token is among the top-$k$ tokens according to the draft model).
    Similarly, if $k=V$, it must accept as well (because the top-$V$ tokens must be the full vocabulary, and so any sample from the target model must accept).
\end{itemize}
\end{proof}

\section{Additional Experiments}
\label{app:extra_experiments}

\subsection{Additional end-to-end speedup results}

We provide additional end-to-end results comparing \sys to baselines, extending the results from Section~\ref{sec:e2e-results}.
Here (Tables~\ref{tab:A100-appendix} and \ref{tab:L40-appendix}), we provide on-device results on A100 and L40 GPUs, for a more extended set of models, relative to the results in Table~\ref{tab:A100}, but on different hardware.

\begin{table}[h!]
    \caption{{\bf On-device results (A100)}:
    The optimal tree configuration and speedup for different pairs of draft and target models, and different temperatures, for \sys vs. SpecInfer.
    We specify the average number of generated tokens per decoding step in parentheses, next to the speedup factor.
    \sys attains up to $4.04 \times$ speedup on an A100. TBT refers to time between tokens.}
    \label{tab:A100-appendix}
    \centering
    \resizebox{1.0\linewidth}{!}{
    \begin{tabular}{l|l|c|c|c|c|c|c}
        \toprule
        \multirow{2}{*}{\bf{Target LLM}} & \multirow{2}{*}{{\bf Draft Model}} & \multirow{2}{*}{{\bf T}} & \multirow{2}{*}{{\bf Dataset}} & {\bf Tree Config.} & \multirow{2}{*}{{\bf Speedup}} & {\bf TBT} &{\bf SpecInfer}\\
         & & & & {\bf (size, depth)} & & ms/token & {\bf $5\times8$} \\
        \midrule
         Llama2-7B & JF68M & 0&    C4 & (128,10)& \bf{4.04 $\times$}(5.08)& 6.0& 3.45$\times$(3.96)\\
        Llama2-7B & JF68M &0.6 & C4   & (128,7)&  \bf{3.18$\times$}(3.92) &7.6& 2.47$\times$(2.97) \\
        Llama2-7B & JF68M & 0&    OpenWebText & (128,7)& \bf{3.22$\times$}(3.86) &7.5& 2.79$\times$(3.15) \\
        Llama2-7B & JF68M &0.6 & OpenWebText   & (128,6)& \bf{2.71$\times$}(3.33) &8.9& 2.10$\times$(2.54)\\
        Llama2-7B & JF68M & 0&    CNN Daily & (128,7)& \bf{3.41$\times$}(4.05) &7.1& 2.95$\times$(3.27)\\
        Llama2-7B & JF68M &0.6 & CNN Daily   & (128,6)&  \bf{2.83$\times$}(3.45) &8.5& 2.11$\times$(2.58) \\
        \midrule
        Llama2-13B & JF68M & 0&    C4 & (64,9)&\bf{3.73$\times$}(4.20) &8.4& 3.30$\times$(3.64) \\
        Llama2-13B & JF68M &0.6 & C4   & (64,7)&  \bf{3.19$\times$}(3.57) &9.8& 2.48$\times$(2.87)\\
        Llama2-13B & JF68M & 0&    OpenWebText & (64,7)& \bf{3.18$\times$}(3.49) &9.8& 2.77$\times$(3.05)\\
        Llama2-13B & JF68M &0.6 & OpenWebText   & (64,6)&  \bf{2.77$\times$}(3.06) &11.3& 2.17$\times$(2.49) \\
        Llama2-13B & JF68M & 0&    CNN Daily & (64,7)& \bf{3.33$\times$}(3.68) &9.4& 2.95$\times$(3.22)\\
        Llama2-13B & JF68M &0.6 & CNN Daily   & (64,6)&  \bf{2.88$\times$}(3.17) &10.8& 2.17$\times$(2.54) \\
        % \midrule
        % Llama2-13B & JF160M & 0&    C4 & (64,7)&\bf{3.10$\times$}(4.69) & 2.74$\times$(4.33)& 2.58$\times$(4.42)\\
        % Llama2-13B & JF160M &0.6 & C4   & (64,6)&  \bf{2.83$\times$}(4.06) & 2.07$\times$(3.46) & 2.02$\times$(3.53) \\
        % Llama2-13B & JF160M & 0&    OpenWebText & (64,6)& \bf{2.72$\times$}(3.90) & 2.26$\times$(3.58)& 2.15$\times$(3.66)\\
        % Llama2-13B & JF160M &0.6 & OpenWebText   & (64,5)& \bf{2.49$\times$}(3.38) & 1.80$\times$(2.96) & 1.77$\times$(3.07)\\
        % Llama2-13B & JF160M & 0&    CNN Daily & (64,6)& \bf{2.84$\times$}(4.05) & 2.36$\times$(3.73) & 2.25$\times$(3.83)\\
        % Llama2-13B & JF160M &0.6 & CNN Daily   & (64,5)&\bf{2.55$\times$}(3.47) & 1.79$\times$(2.97) & 1.74$\times$(3.03)\\
        \midrule
        Vicuna-33B & SL1.3B & 0&    C4 & (64,6)& \bf{2.27$\times$}(4.28) &23.4& 1.83$\times$(3.86) \\
        Vicuna-33B & SL1.3B &0.6 & C4   & (64,6)&  \bf{2.19$\times$}(4.16) &24.3& 1.64$\times$(3.53)\\
        Vicuna-33B & SL1.3B & 0&    OpenWebText & (64,5)& \bf{2.21$\times$}(3.93) &24.1& 1.75$\times$(3.70) \\
        Vicuna-33B & SL1.3B &0.6 & OpenWebText   & (64,5)& \bf{2.13$\times$}(3.82) &25.0& 1.57$\times$(3.36) \\
        Vicuna-33B & SL1.3B & 0&    CNN Daily & (64,5)& \bf{2.21$\times$}(3.93) &24.1& 1.75$\times$(3.71)\\
        Vicuna-33B & SL1.3B &0.6 & CNN Daily   & (64,5)&  \bf{2.16$\times$}(3.86) &24.6& 1.58$\times$(3.40) \\
        \bottomrule
    \end{tabular}}
\end{table}

\begin{table}[ht]
    \caption{{\bf on-device results (L40)}:
    The optimal tree configuration and speedup for different pairs of draft and target models, and different temperatures, for \sys vs. SpecInfer.
    We specify the average number of generated tokens per decoding step in parentheses, next to the speedup factor.
    \sys attains up to $3.95 \times$ speedup on an L40.}
    \label{tab:L40-appendix}
    \centering
    \resizebox{1.0\linewidth}{!}{
    \begin{tabular}{l|l|c|c|c|c|c}
        \toprule
        \multirow{2}{*}{\bf{Target LLM}} & \multirow{2}{*}{{\bf Draft Model}} & \multirow{2}{*}{{\bf T}} & \multirow{2}{*}{{\bf Dataset}} & {\bf Tree Config.} & \multirow{2}{*}{{\bf Speedup}} & {\bf SpecInfer} \\
         & & & & {\bf (size, depth)} & & {\bf $5\times8$}\\
        \midrule
         Llama2-7B & JF68M & 0&    C4 & (64,10)& \bf{3.95$\times$}(4.68) & 3.50$\times$(3.98)\\
        Llama2-7B & JF68M &0.6 & C4   & (64,7)&  \bf{3.10$\times$}(3.63) & 2.28$\times$(2.89)\\
        Llama2-7B & JF68M & 0&    OpenWebText & (64,7)& \bf{3.12$\times$}(3.58) & 2.79$\times$(3.16)\\
        Llama2-7B & JF68M &0.6 & OpenWebText   & (64,6)&  \bf{2.68$\times$}(3.12) & 2.08$\times$(2.54)\\
        Llama2-7B & JF68M & 0&    CNN Daily & (64,7)& \bf{3.30$\times$}(3.79) & 2.89$\times$(3.28)\\
        Llama2-7B & JF68M &0.6 & CNN Daily   & (64,6)&  \bf{2.81$\times$}(3.27) & 2.09$\times$(2.59)\\
        \midrule
        Llama2-13B & JF68M & 0&    C4 & (64,10)& \bf{3.15$\times$}(4.25) & 2.76$\times$(3.61)\\
        Llama2-13B & JF68M &0.6 & C4   & (64,8)&  \bf{2.62$\times$}(3.57) & 2.06$\times$ (2.81)\\
        Llama2-13B & JF68M & 0&    OpenWebText & (64,8)& \bf{2.64$\times$}(3.52) & 2.34$\times$(3.05) \\
        Llama2-13B & JF68M &0.6 & OpenWebText   & (64,6)&  \bf{2.28$\times$}(3.07) & 1.79$\times$(2.44)\\
        Llama2-13B & JF68M & 0&    CNN Daily & (64,7)& \bf{2.78$\times$}(3.68) & 2.47$\times$(3.21)\\
        Llama2-13B & JF68M &0.6 & CNN Daily   & (64,7)&  \bf{2.37$\times$}(3.22) & 1.85$\times$(2.51)\\
        \bottomrule
    \end{tabular}}
\end{table}

    % \includegraphics[width=0.3\linewidth]{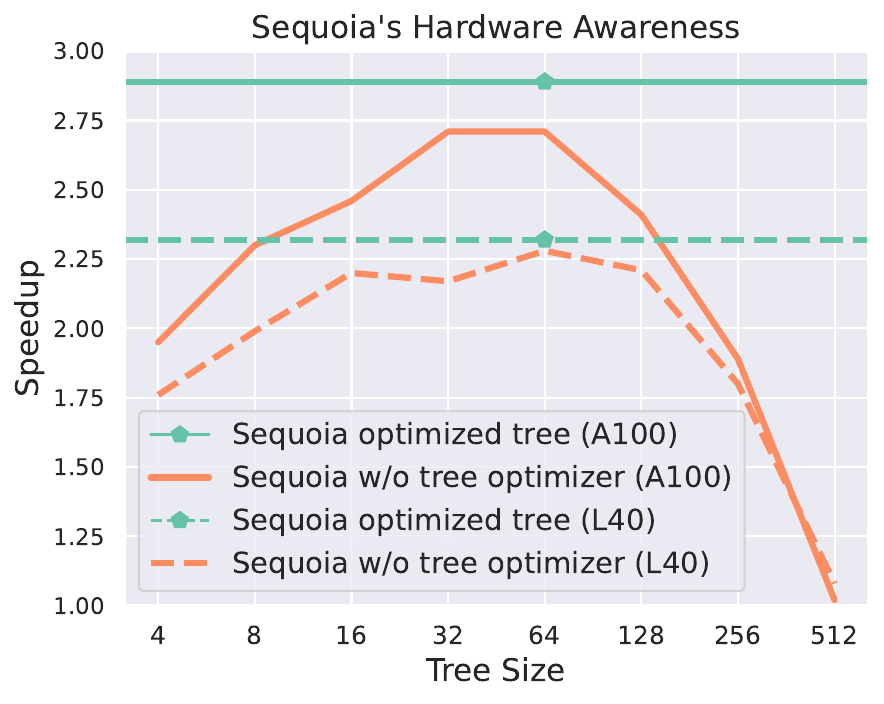}
    % {\bf Right}: We compare the wall-clock time speedup of \sys trees of various sizes (orange lines)---chosen to maximize the \# generated tokens---with the speedup of the trees selected by the hardware-aware tree optimizer (horizontal green lines)---chosen to maximize speedup---on A100 and L40 GPUs.
    % The optimizer can select the optimal tree size and depth for each type of hardware; 
    % by limiting the depth of the tree it can make speculation faster and thus attain larger speedups than the trees with unconstrained depth (orange lines).

\subsection{More Comparisons with SpecInfer}
To demonstrate the optimality of \sys's tree construction, we provide a sweep of tree configurations and corresponding speedups of SpecInfer in~\Cref{tab: sweepgreedy,tab: sweepstochastic}. \sys attains better speedups in both greedy decoding and stochastic decoding than all tree configurations of SpecInfer. 
\begin{table}[h!]
    \caption{A sweep of tree configurations and their corresponding speedups of SpecInfer~\citep{miao2023specinfer} on A100. The draft model is JF68M, and the target model is Llama2-7B in greedy decoding. The evaluated dataset is C4. The default tree configuration in SpecInfer is $5\times8$, which brings 3.45$\times$ speedup while \sys achieves 4.04$\times$ speedup, surpassing all tree configurations below.}
    \centering
    \label{tab: sweepgreedy}
    \resizebox{0.75\linewidth}{!}{
    \begin{tabular}{c|cccccccc}
        \toprule
         {\bf Width/Depth} & {\bf 1} &{\bf 2} & {\bf 4} & {\bf 8} & {\bf 16} & {\bf 32} & {\bf 64} & {\bf 128} \\
         \midrule
         {\bf 1} & {} &{} & {} & {3.09$\times$} & {3.14$\times$} & {2.75$\times$} & {1.94$\times$} & {1.19$\times$} \\
         {\bf 2} & {} &{} & {2.95$\times$} & {3.36$\times$} & {3.46$\times$} & {2.69$\times$} & {1.74$\times$} & {} \\
         {\bf 4} & {} &{2.4$\times$} & {3.14$\times$} & {3.46$\times$} & {3.41$\times$} & {2.47$\times$} & {} & {} \\
         {\bf 8} & 1.88$\times$	& 2.44$\times$ & 	3.14$\times$ &	3.70$\times$ &	3.03$\times$	& {} & {} & {} \\
         {\bf 16} & 2.00$\times$&	2.55$\times$&	3.27$\times$&	3.14$\times$ &  &  &  &  \\
         {\bf 32} & 1.86$\times$	&2.57$\times$	&2.81$\times$ &  &  &  &  &  \\
         {\bf 64} & 1.92$\times$	&2.22$\times$	&& &  &  &  &   \\
         {\bf 128} & {1.68$\times$} & &&&&&& \\
        \bottomrule
    \end{tabular}}
\end{table}

\begin{table}[h!]
    \caption{A sweep of tree configurations and their corresponding speedups of SpecInfer~\citep{miao2023specinfer} on A100. The draft model is JF68M, and the target model is Llama2-7B in stochastic decoding. The evaluated dataset is C4. The default tree configuration in SpecInfer is $5\times8$, which brings 2.47$\times$ speedup while \sys achieves 3.18$\times$ speedup, surpassing all tree configurations below.}
    \centering
    \label{tab: sweepstochastic}
    \resizebox{0.75\linewidth}{!}{
    \begin{tabular}{c|cccccccc}
        \toprule
         {\bf Width/Depth} & {\bf 1} &{\bf 2} & {\bf 4} & {\bf 8} & {\bf 16} & {\bf 32} & {\bf 64} & {\bf 128} \\
         \midrule
         {\bf 1} & {} &{} & {} & {2.08$\times$} & {1.87$\times$} & {1.48$\times$} & {1.11$\times$} & {0.69$\times$} \\
         {\bf 2} & {} &{} & {2.14$\times$} & {2.2$\times$} & {1.89$\times$} & {1.46$\times$} & {1.07$\times$} & {} \\
         {\bf 4} & {} &{1.99$\times$} & {2.3$\times$} & {2.28$\times$} & {1.95$\times$} & {1.53$\times$} & {} & {} \\
         {\bf 8} & 1.73$\times$	& 2.09$\times$ & 	2.42$\times$ &	2.42$\times$ &	2.14$\times$	& {} & {} & {} \\
         {\bf 16} & 1.78$\times$&	2.07$\times$&	2.41$\times$&	2.18$\times$ &  &  &  &  \\
         {\bf 32} & 1.78$\times$	&2.08$\times$	&2.24$\times$ &  &  &  &  &  \\
         {\bf 64} & 1.73$\times$	&2.04$\times$	&& &  &  &  &   \\
         {\bf 128} & {1.61$\times$} & &&&&&& \\
        \bottomrule
    \end{tabular}}
\end{table}

\begin{table}[h!]
    \caption{A sweep of tree configurations and their corresponding speedups of SpecInfer~\citep{miao2023specinfer} on L40 offloading setting. The draft model is Llama2-7B-chat, and the target model is Llama2-70B-chat in stochastic decoding. The evaluated dataset is MT-Bench. \sys achieves 8.4$\times$ speedup, surpassing all tree configurations below.}
    \centering
    \label{tab: offloadingplus}
    \resizebox{0.5\linewidth}{!}{
    \begin{tabular}{c|ccc}
        \toprule
         {\bf Tree Config.} & {(16,48)} &{(24,32)} & {(32,24)} \\
         \midrule
         {\bf Speedup} & {5.2$\times$} &{5.3$\times$} & {5.5$\times$}\\
        \bottomrule
    \end{tabular}}
\end{table}

\subsection{Scalability Additional Results}
\label{app:scalability-extra-results}

Here we present additional results demonstrating the scalability of the \sys tree construction algorithm relative to baselines, for several Pythia draft and target model pairs on the WikiText-103 dataset:

\begin{figure}[h!]
    \centering
    \includegraphics[width=135px]{icml2024/figures/scalability_Pythia-410m_Pythia-12b_1.0.pdf}\includegraphics[width=135px]{icml2024/figures/scalability_Pythia-2.8b_Pythia-12b_1.0.pdf}\includegraphics[width=137px]{icml2024/figures/scalability_Llama2-7b_Llama2-70b_1.0.pdf}\vspace{-6px}
    \caption{\textbf{Number of generated tokens vs. tree size}: We plot the average number of tokens generated for different tree structures per decoding step of the target model, as a function of the tree size, for different draft and target model pairs. The number of generated tokens for \sys trees continues to grow with the tree size, while other tree structures asymptote.}
    \vspace{-2mm}
    \label{fig:scalability}
\end{figure}

\subsection{Robustness Additional Results}
See~\Cref{tab: sampling results}.
\label{app:robustness-extra-results}

\begin{table}[h!]
    \caption{We compare the robustness of the \Sys sampling and verification algorithm to the top-$p$ hyperparameter, relative to SpecInfer and top-$k$ sampling.
    We present total speedups on an A100 GPU for the different methods (number of generated tokens in parentheses).
    We hold the tree structure fixed across methods, use JF68M as the draft model, and Llama2-7B as the target model.}
    \centering
    \label{tab: sampling results}
    \resizebox{0.5\linewidth}{!}{
    \begin{tabular}{cccc}
        \toprule
         {\bf Top-$p$} & {\bf \sys (Ours)} &{\bf SpecInfer} & {\bf top-$k$ sampling} \\
        \midrule
         % (0.4, 1.0) & $2.98\times$(3.64) & $2.37\times$(2.88)& $2.96\times$(3.45)\\
         % (0.6, 1.0) & $2.85\times$(3.46) & $2.47\times$(2.98) & $2.76\times$(3.22)\\
         % (0.8, 1.0) & $2.75\times$(3.36)& $2.54\times$(3.10) & $2.47\times$(2.88)\\
         0.8 & $2.54\times$(3.18)& $2.35\times$(2.93) & $2.43\times$(2.90)\\ 
         0.9 & $2.61\times$(3.27)& $2.42\times$(3.01) & $2.27\times$(2.71)\\ 
         1.0 & $2.69\times$(3.26)& $2.55\times$(3.10) & $2.12\times$(2.44)\\ 
        \bottomrule
    \end{tabular}}
\end{table}

% HARDWARE-AWARE OPTIMIZER MOVED TO MAIN PAPER

\subsection{Evaluation of \Sys hardware-aware optimizer}
\label{sec:hardware-results}

In this section, we demonstrate the effectiveness of the \sys hardware-aware tree optimizer.
We compare the speedups attained by the \sys trees of various sizes from~\Cref{fig:ablation_scalable_robust} (left) to the trees selected by the hardware-aware tree-optimizer.
Because the tree optimizer is able to limit the tree depth to make speculation faster, it is able to attain larger end-to-end speedups than any of the \sys trees from~\Cref{fig:ablation_scalable_robust} (left), whose structures were chosen to maximize the expected number of generated tokens (not the speedup).
The optimizer is also able to automatically find the tree size that produces the largest overall speedup.

\begin{figure}[ht]
    \centering
    \includegraphics[width=160px]{icml2024/figures/sec4_hardware_aware.pdf}
    \label{fig:hardware}
    \caption{We compare the wall-clock time speedup of \sys trees of various sizes (orange lines)---chosen to maximize the \# generated tokens---with the speedup of the trees selected by the hardware-aware tree optimizer (horizontal green lines)---chosen to maximize speedup---on A100 and L40 GPUs.
    The optimizer can select the optimal tree size and depth for each type of hardware; 
    by limiting the depth of the tree it can make speculation faster and thus attain larger speedups than the trees with unconstrained depth (orange lines).}
    % \caption{The wall-clock time speedup of \sys trees which maximum expected generated tokens with different sizes of budget. The horizontal lines show the speedup of the trees selected by tree optimizer on A100 and L40.}
\end{figure}

% MOVED TO MAIN PAPER

As mentioned in Section~\ref{sec:e2e-results}, one of the inputs to the hardware aware optimizer is $t(n)$, which is the hardware-dependent amount of time it takes the target model to verify $n$ tokens divided by the time to verify 1 token.
In Figure~\ref{fig:t_n} we show the forward pass times for different models on different hardware, for different number of tokens $n$.
As you can see, the forward pass times are roughly constant for low values of $n$, but then eventually start growing roughly linearly in $n$---the value of $n$ at which $t(n)$ begins to grow is model and hardware dependent.
In general, this value of $n$ is \textit{lower} for hardware that has a \textit{higher} ratio of bandwidth (between GPU HBM and SRAM) to FLOPS, because it is less memory bound).

\begin{figure}
   
\centering\includegraphics[width=160px]{icml2024/figures/t_n_unnormalized.pdf}
\vspace{-2mm}
\caption{Forward pass times for different model/hardware combinations as a function of the number of tokens $n$ being processed.
We use these values to choose the optimal tree.}
\vspace{-2mm}
\label{fig:t_n}
\end{figure}

% \newpage
% \input{checklist}

\end{document}